\newtheorem{theorem}{Theorem}[section]
\newtheorem{lemma}[theorem]{Lemma}
\newtheorem{definition}{Definition}[section]
\newcommand{\dadmm}{\texttt{D-ADMM}\xspace}
\newcommand{\avgestimate}{\bar{z}}
\newcommand{\sysname}{\texttt{SecureD-FL}\xspace}
\begin{document}

% \title{Privacy-preserving Decentralized Federated Learning}
\title{Privacy-preserving Decentralized Aggregation for Federated Learning}

\author{
\IEEEauthorblockN{Beomyeol Jeon\textsuperscript{*}\textsuperscript{1}, S. M. Ferdous\textsuperscript{*}\textsuperscript{2}, 
Muntasir Raihan Rahman\textsuperscript{$\dagger$}\textsuperscript{3}, 
and Anwar Walid\textsuperscript{4}}
\IEEEauthorblockA{\textit{
\textsuperscript{1}University of Illinois at Urbana-Champaign,
\textsuperscript{2}Purdue University,
\textsuperscript{3}Microsoft,
\textsuperscript{4}Nokia Bell Labs}
}
\textsuperscript{1}bj2@illinois.edu,
\textsuperscript{2}sferdou@purdue.edu,
\textsuperscript{3}Muntasir.Rahman@microsoft.com,
\textsuperscript{4}anwar.walid@nokia-bell-labs.com
% \IEEEauthorblockN{1\textsuperscript{st} Given Name Surname}
% \IEEEauthorblockA{\textit{dept. name of organization (of Aff.)} \\
% \textit{name of organization (of Aff.)}\\
% City, Country \\
% email address or ORCID}
% \and
% \IEEEauthorblockN{2\textsuperscript{nd} Given Name Surname}
% \IEEEauthorblockA{\textit{dept. name of organization (of Aff.)} \\
% \textit{name of organization (of Aff.)}\\
% City, Country \\
% email address or ORCID}
% \and
% \IEEEauthorblockN{3\textsuperscript{rd} Given Name Surname}
% \IEEEauthorblockA{\textit{dept. name of organization (of Aff.)} \\
% \textit{name of organization (of Aff.)}\\
% City, Country \\
% email address or ORCID}
% \and
% \IEEEauthorblockN{4\textsuperscript{th} Given Name Surname}
% \IEEEauthorblockA{\textit{dept. name of organization (of Aff.)} \\
% \textit{name of organization (of Aff.)}\\
% City, Country \\
% email address or ORCID}
}

\maketitle
\begingroup\renewcommand\thefootnote{*}
\footnotetext{Both authors contributed equally to this work.}
\endgroup
\begingroup\renewcommand\thefootnote{$\dagger$}
\footnotetext{This work was done when the author was at Nokia Bell Labs.}
\endgroup

\begin{abstract}
Federated learning is a promising framework for learning over decentralized data spanning multiple regions. 
This approach avoids expensive central training data aggregation cost and can improve privacy because distributed sites do not have to reveal privacy-sensitive data.
% In this paper, we develop \sysname, a privacy-preserving decentralized federated learning algorithm, i.e., without the traditional centralized aggregation server.
In this paper, we develop a privacy-preserving decentralized aggregation protocol for federated learning.
We formulate the distributed aggregation protocol with the Alternating Direction Method of Multiplier (ADMM) and examine its privacy weakness.
Unlike prior work that use Differential Privacy or homomorphic encryption for privacy,
we develop a protocol that controls communication among participants in each round of aggregation to minimize privacy leakage.
We establish its privacy guarantee against an honest-but-curious adversary.
We also propose an efficient algorithm to construct such a communication pattern, inspired by combinatorial block design theory.
Our secure aggregation protocol based on this novel group communication pattern design leads to an efficient algorithm for federated training with privacy guarantees.
% For the decentralized aggregation, we employ the Alternating Direction Method of Multiplier (ADMM) and examine its privacy weakness.
% To address the privacy risk, we introduce a communication pattern inspired by the combinatorial block design theory and establish its theoretical privacy guarantee. We also propose an efficient algorithm to construct such a communication pattern.
We evaluate our federated training algorithm on image classification and next-word prediction applications over benchmark datasets with 9 and 15 distributed sites.
% We evaluate our method on image classification and next-word prediction applications over federated benchmark datasets with nine and fifteen distributed sites hosting training data. 
% With an only two-round secure aggregation protocol, \bj{Remove two-round. This actually 4-round protocol. 2 rounds (partial $z$ and final $z$) for each iteration. two iterations have 4 rounds}
%our proposed decentralized privacy-preserving algorithm 
Evaluation results show that our algorithm performs comparably to the standard centralized federated learning method while preserving privacy; the degradation in test accuracy is only up to 0.73\%.
\end{abstract}

\begin{IEEEkeywords}
% federated learning, privacy-preserving aggregation, decentralized protocol
federated learning, secure aggregation
\end{IEEEkeywords}

\section{Introduction}
In various IoT and networking applications, data is collected and stored at the source, and there is an interest in utilizing distributed data to train machine learning (ML) models.
However, due to transport costs, delays, and privacy concerns, the data may not be moved to a central location for processing and learning.
For example, several factory sites, employing same robots, are interested in training an ML model for a robot maintenance scheduling plan that can benefit each competing factory site without central aggregation of privacy sensitive data.
To improve the scheduling plan quality, the factories may want to utilize all data collected from robots at all sites. Still, they may not want to share privacy-sensitive raw data or move it to a central location.
Similar use-cases arise in hospital settings, where different hospitals want to improve diagnosis accuracy by utilizing the privacy sensitive patient data of all hospitals without central aggregation.

% \emph{Federated learning} (FL)~\cite{mcmahan2017communication, konen2016federated} has been proposed to satisfy such requirements.
\emph{Federated learning} (FL)~\cite{mcmahan2017communication, konen2016federated} has been proposed as a promising framework for learning over decentralized data spanning multiple regions that would allow such decentralized private sites to learn a global model that trains over all data across sites without central aggregation. 
Instead of transferring training data, each site trains identical ML models leveraging their local data.
% Instead of transferring training data, each site trains identical ML models\footnote{The case where each site learns different but related models is an extension known as federated multi-task learning and is outside the scope of this paper.} leveraging their local data.
A central server aggregates these locally trained models to generate a global model and distributes it over all the sites.
In FL, private local data stays local; only the trained model parameters move. Thus it provides the privacy of local data and also avoids expensive data transfer.

%%%%%% comment out for space
\iffalse
Compared to training datasets, ML model parameters are typically orders of magnitude smaller in size. For instance, the ImageNet~\cite{imagenet_cvpr09} dataset contains about 14 million photos, which is approximately 160 GB. The sizes of convolutional neural network models trained on such a dataset (e.g., AlexNet~\cite{alexnet}, ResNet-110~\cite{resnet}, Inception-V3~\cite{inceptionv3}) can be 6.5--244 MB (1.7--60 million parameters). %Thus, 
FL gains traction as a popular method for training ML models across distributed large-scale privacy-sensitive data~\cite{hard2018federated, yang2018applied, chen2019federated, ramaswamy2019federated}.
\fi
%%%%%%%%%%%%%%%%%

Although the standard FL only exchanges model parameters, an adversary can still infer privacy-sensitive information from the leaked model parameters~\cite{shokri2017membership,nasr2019comprehensive,melis2019exploiting}.
Even with only the trained model, an attacker can infer whether a particular data item belongs to the dataset used for the ML model training (membership inference attack~\cite{shokri2017membership}).

Techniques to address these privacy concerns include Secure Multi-party Computation~\cite{bonawitz2017practical, chen2019secure, agrawal2019quotient}, Differential Privacy~\cite{shokri2015privacy, abadi2016dldifferentialprivacy}, and combinations of both~\cite{truex2018hybrid, hybridalpha, pysyft}.
%\smf{do we not need reference for the following line?} \bj{These are observed in the previous references.}
However such approaches incur large computation overheads, require a trusted third party for the secret key generation, or sacrifice the quality of trained models due to the introduced noise. %do not scale well, 
Importantly, these approaches require a central aggregation server. If the server is compromised, the leaked ML models cause privacy risks. % such as the membership inference attacks.
% Generic MPC only works with few sites.
% Importantly, these approaches require a central aggregation server. If the central server used for model aggregation is compromised, the leaked models can create privacy risks.
Besides, the central aggregator can be a single point of failure.
Many devices, simultaneously uploading model updates to the server, cause TCP incast~\cite{chen2009incast}, which can slow down the training process.
% which can result in degraded performance.
%TCP incast~\cite{chen2009incast} can be caused by a large number of users simultaneously sending model updates to the central server.
% In this paper, we address whether efficient and privacy-preserving federated learning can be performed in a fully decentralized manner.

% In addition to the privacy risk, the central aggregation server can be a single point of failure. Many edge devices, simultaneously uploading model updates to the central cloud, can cause TCP incast~\cite{chen2009incast}, which results in degraded performance. In this paper, we address whether efficient and privacy-preserving federated learning can be performed in a fully decentralized manner.
% \smf{ we have only one protocol. Right? The second protocol. we did not discuss the first one.} \bj{Right. The following is obsolete. Updated. Please check the following.}

%\bj{There are multiple prior work that provides privacy-preserving properties in ADMM, which we did not mention in the paper. Those work integrate DP or homomorphic encryption. We should include them and highlight that we provide privacy guarantee in a different way by using communication patterns).}
% In this paper, 

To avoid challenges with a central server, we choose Alternating Direction Method of Multiplier (ADMM) for decentralized aggregation.
Several approaches have been proposed to augment ADMM with privacy guarantees by using Differential Privacy \cite{Zhang2017dynamic, Huang2020dpadmm, zhang2018improving} and homomorphic encryption \cite{zhang2018admm}.
In this work, we propose a different approach: \emph{control a communication pattern among participants}. The main idea is that we develop a protocol to decide which group of parties should communicate in each round of aggregation to minimize privacy leakage.
We establish its theoretical privacy guarantee against an honest-but-curious adversary. 
Our novel communication pattern generation is inspired by combinatorial block design theory~\cite{stinson2007combinatorial}.
To the best of our knowledge, we are the first to apply combinatorial design theory to develop group communication patterns for secure decentralized aggregation.
% Our novel secure aggregation protocol forms the basis of an efficient privacy-preserving federated learning algorithm. 
Our secure aggregation protocol forms the basis of a federated learning algorithm that is privacy preserving and efficient.
Our contributions in this paper are as follows.

% In this paper, 
% To address these challenges, we design an efficient privacy-preserving FL approach that runs in a fully decentralized manner.
% To answer this question, we design a fully distributed federated learning protocol that provides the privacy guarantee for locally trained models.
%To the best of our knowledge, this is the first decentralized FL approach that provides the privacy guarantee.
% Our contributions are as follows.

\begin{itemize}
\item We show that classic ADMM-based aggregation algorithm has a privacy risk against an \textit{honest-but-curious} adversary.
% \item We show that the classic %decentralized 
% Alternating Direction Method of Multiplier (ADMM) based aggregation algorithm has linear convergence but has a privacy risk where an \textit{honest-but-curious} adversary can reverse-engineer other model parameters from collected message history.
% We adapt ADMM to design decentralized secure aggregation and identify its privacy risk.

\item We propose a scheme that generates disjoint groups and allows communication only within a group in each ADMM iteration. This preserves privacy and linear convergence.
% \item We propose a scheme that generates disjoint groups and allows communication only within a group in each ADMM iteration. This enables us to preserve privacy while keeping the linear convergence intact.
\item We introduce \emph{gap}, an approximate measure of privacy, which provides a trade-off between privacy and accuracy. The gap represents the number of iterations between two devices being in the same group. If the gap is at least $k$, ADMM can run securely for at most $2k-1$ iterations without revealing private local model parameters.
% We introduce \emph{gap}, an approximate measure of privacy, which provides a trade-off between privacy and accuracy. %\bj{gap does not provide any trade-off, does it?}
% Formally, suppose the gap (the number of iterations between two devices being in the same group) is at least $k$. In that case, ADMM can run for at most $2k-1$ iterations without revealing private local model parameters. 
% a communication gap in the ADMM communication and prove that the privacy is guaranteed with the communication gap.

\item We show that our desired group formation (with the gap constraints) is equivalent to a class of \textit{resolvable balanced incomplete block design} problems in combinatorial design theory~\cite{stinson2007combinatorial}. To the best of our knowledge, we are the first to explore this connection. 
%Inspired by the recent works on block design, 
We also propose an efficient randomized algorithm for the group formation.

\item We design a FL algorithm with our secure aggregation and evaluate it on image classification and language models over the benchmark datasets.
% We evaluate our FL approach on image classification and language models over the benchmark datasets. %~\cite{leafdataset, cifar10}.
Our decentralized secure FL shows training performance comparable to the %state-of-the-art
centralized FL~\cite{mcmahan2017communication} with two-round aggregation.
% \item We evaluate our FL approach on image classification and language models over the benchmark datasets~\cite{leafdataset, cifar10}. Our decentralized secure FL shows training performance comparable to the state-of-the-art centralized FL~\cite{mcmahan2017communication} with an only two-round aggregation.
%while preserving the local model privacy.
We also analyze the trade-off between the number of communication rounds in aggregation and the estimation quality.
We evaluate the performance of our proposed randomized group construction algorithm.

\end{itemize}

\section{Distributed Averaging by ADMM}
We employ an optimization technique to address the privacy concern in aggregating the local model parameters. %in FL. %Federated Learning. 
In this section, we will first formulate the aggregation problem as a standard distributed optimization problem. Then we will explore a privacy issue in the optimization formulation. First, let us define the preliminaries. The number of users (or data sites) is denoted by $n$. 
% In each FL round, 
In FL, each user $k$ has a local model parameter vector $w_k$~\footnote{For simplicity, a vector is used but this generalizes to a tensor.}.
% After each global epoch of FL, each device, $k$, has a local model parameter $w_k$. In case of neural network, the $w_k$s' are the weights of the connection. %({\color{green} m is not defined. I guess it is the number of model parameters}). 
The global model parameter is computed by averaging local parameters. %$\frac{1}{n}\sum_{k=1}^n w_k$
Let $x$ be a variable of equivalent dimension of $w_k$. Then we can formulate an optimization problem as follows. 
%\vspace{-.5cm}
    \begin{align}
    \label{lab:prob1}
    & {\text{minimize}} & \sum_{k=1}^n ||(x-w_k)||_2^2
    \end{align}
%\vspace{-.5cm}

% It is well-known that the optimal solution of this unconstrained least square problem is our intended average, i.e., $x^* = {\sum_{k=1}^n w_k}/{n}$. %$x^* = \frac{\sum_{k=1}^n w_k}{n}$
The optimal solution of this unconstrained least square problem is the average, i.e., $x^* = \frac{1}{n}{\sum_{k=1}^n w_k}$.
To solve this problem in a distributed way, we follow the standard distributed consensus technique described in \cite{Boyd+2011}. We introduce local variables $x_k$ for each user $k \in [n]$ and a consensus variable $z$. An equivalent optimization problem of Equation~\ref{lab:prob1} is as follows.

% \vspace{-.5cm}
    \begin{align}
    \label{lab:prob2}
    & {\text{minimize}} & \sum_{k=1}^n ||(x_k-w_k)||_2^2\\
    & \text{subject to} & x_k = z, \; k=1, \ldots, n. \nonumber
    \end{align}
% \vspace{-.5cm}

We compute the augmented Lagrangian $L_\rho(x_k,\lambda_k, z)$ as,
\begin{equation}
\sum_{k=1}^n (||x_k-w_k||_2^2  + \lambda_k^T(x_k-z) + \frac{\rho}{2} || x_k-z||^2) 
% L_\rho(x_k,\lambda_k, z) = \sum_{k=1}^n & (||x_k-w_k||_2^2  + \lambda_k^T(x_k-z) + \frac{\rho}{2} || x_k-z||^2) 
\end{equation}
%\bj{curly braces around $x_k$ and $\lambda_k$ are needed?}
%\smf{removed}

Here, $\lambda_k$ is a dual variable defined for each user, and $\rho \in \mathbb{R}$ is a penalty parameter.
    
Following the standard ADMM technique as in \cite{Boyd+2011}, the $x_k$-minimization, the $z$-minimization, and $\lambda_k$ updates can be written as follows.
\begin{align}
    & x_k^{i} = \frac{1}{2+\rho} (2w_k-\lambda_k^{i-1} + \rho z^{i-1}) \label{eqn:x_min}\\
    & z^{i} = \frac{1}{n} \sum_{k=1}^n (x_k^{i} + \frac{1}{\rho} \lambda_k^{i-1} ) \\
    & \lambda_k^{i} = \lambda_k^{i-1} + \rho \cdot (x_k^{i}-z^{i})
\end{align}
Here $i$ is an iteration counter.

\subsection{The ADMM-based Aggregation Algorithm}

\begin{algorithm}[H]
\caption{ADMM-based Distributed Averaging}
\label{alg:admm}
\begin{algorithmic}
\State Initialize $\lambda_k^0$ for each user $k$
\State $z^0 = 0$

\For{iteration $i=1,2\dots$}\Comment{Until stopping criteria met}
    \ForAll{user $k$ {\bf in parallel}}
        \State $x_k^{i} = \frac{1}{2+\rho} (2w_k-\lambda_k^{i-1} + \rho z^{i-1})$ \Comment{$x_k$ minimization}
        \State Send $y_k^i = x_k^{i} + \frac{1}{\rho} \lambda_k^{i-1}$ to all
        % \State Receive all $y_{k^\prime}^i \quad s.t. \quad k^\prime \neq k$
        \State $z^{i} = \frac{1}{n} \sum_{k=1}^n y_k^{i} $ \Comment{$z$ minimization}
        \State $\lambda_k^{i} = \lambda_k^{i-1} + \rho \cdot (x_k^{i}-z^{i})$ \Comment{$\lambda_k$ update}
    \EndFor
\EndFor

% \Repeat
% \For{iteration $i=1,2\dots$}\Comment{until stopping criteria met}
%     \ForAll{user $k$ {\bf in parallel}}
%         \State $x_k^{t} = \frac{1}{2+\rho} (2w_k-\lambda_k^{t-1} + \rho z^{t-1})$ \Comment{$x_k$ minimization}
%     \EndFor
%         \State $z^{t} = \frac{1}{n} \sum_{k=1}^n (x_k^{t} + \frac{1}{\rho} \lambda_k^{t-1} )$ \Comment{$z$ minimization}
%     \ForAll{user $k$  {\bf in parallel}}
%         \State $\lambda_k^{t} = \lambda_k^{t-1} + \rho \cdot (x_k^{t}-z^{t})$ \Comment{$\lambda_k$ update}
%     \EndFor
% \EndFor
% \Until{stopping criteria met}

\end{algorithmic}
\end{algorithm}

% ({\color{green}MRR: In the pdf, this shows as alg 0, but maybe should be alg 1}) 
Algorithm~\ref{alg:admm} shows the ADMM-based distributed averaging. Each user $k$ initializes its dual variable $\lambda_k^0$ randomly and sets the consensus variable $z^0$ to zero. In iteration $i$, user $k$ calculates $x_k^{i}$ by minimizing the augmented Lagrangian using $z^{i-1}$ and $\lambda_k^{i-1}$ (Equation~\ref{eqn:x_min}). Then users communicate each other to update the consensus variable $z^i$. %using $x_k^{i}$ and $\lambda_k^{i-1}$. 
Finally, the user $k$ updates its dual variable $\lambda_k^{i}$ with the updated $x_k^{i}$ and $z^i$. 
The iterations continue until the stopping criteria are met. %Change of $z$ could beWe will proof that the consensus variable $z$ converges linearly to the true average. That is why we used the change of $z$ in consecutive iteration as a tolerance measure for stopping. 

%We can easily implement this algorithm for distributed environments. Notice that $x_k$ minimization and $\lambda_k$ update can run locally. The communication is needed only during $z$ minimization. 
% We can easily implement this algorithm for distributed environments. Notice that the $x_k$ minimization and $\lambda_k$ update are both independent of communication. The devices only communicate during $z$ minimization. 

\subsection{Proof of Convergence} 
%\bj{Can be removed. Well-known?}
% For Algorithm~\ref{alg:admm}, 
%We will show that the consensus variable $z$ converges linearly to the true average. 
%Our assumption is that the $\rho$ is not very large. \bj{What if $\rho$ is large? Why is this assumption needed?}
Lemma~\ref{lemma:dual_zero} shows that, after the first iteration of Algorithm~\ref{alg:admm}, the summation of the all the dual variables becomes a zero vector. 
% For this first we need to prove that the element-wise summation of the dual variables of all the devices goes to zero after first iteration. 
\begin{lemma}
\label{lemma:dual_zero}
$\forall{i \geq 1}$, $\sum_{k=1}^n \lambda_{k}^{i} = 0$
\end{lemma}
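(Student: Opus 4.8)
The plan is to prove the claim by direct algebraic manipulation, summing the dual-variable update over all users and then eliminating the consensus variable $z^i$ using its own defining equation. The key observation is that the sum collapses to zero \emph{for every} $i \geq 1$ regardless of how the $\lambda_k^0$ are initialized; the randomness in the initial duals is washed out after a single iteration precisely because the $z$-minimization step is an average that already incorporates the dual terms.

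Concretely, I would begin from the $\lambda_k$ update and sum over $k$:
\begin{align}
\sum_{k=1}^n \lambda_k^{i} = \sum_{k=1}^n \lambda_k^{i-1} + \rho \sum_{k=1}^n \bigl(x_k^{i} - z^{i}\bigr) = \sum_{k=1}^n \lambda_k^{i-1} + \rho \sum_{k=1}^n x_k^{i} - \rho n\, z^{i}. \nonumber
\end{align}
Next I would substitute the $z$-minimization step. Multiplying $z^{i} = \frac{1}{n}\sum_{k=1}^n \bigl(x_k^{i} + \frac{1}{\rho}\lambda_k^{i-1}\bigr)$ through by $\rho n$ gives $\rho n\, z^{i} = \rho \sum_{k=1}^n x_k^{i} + \sum_{k=1}^n \lambda_k^{i-1}$. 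Inserting this into the displayed expression cancels both the $\rho \sum_k x_k^i$ term and the $\sum_k \lambda_k^{i-1}$ term, leaving $\sum_{k=1}^n \lambda_k^{i} = 0$.

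There is no real obstacle here: the result is a one-line cancellation once the $z$-update is plugged in, and the induction is trivial because the argument holds for each $i \geq 1$ independently rather than requiring the previous iterate to satisfy the identity. If anything, the only point worth emphasizing in the writeup is \emph{why} the identity starts at $i=1$ and not $i=0$, namely that $\lambda_k^0$ is chosen arbitrarily (randomly) and so $\sum_k \lambda_k^0$ need not vanish, but the very first application of the coupled $z$-update and $\lambda$-update forces the sum to zero thereafter.
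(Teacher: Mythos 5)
Your proof is correct and follows essentially the same route as the paper: sum the $\lambda_k$ update over $k$, substitute the $z$-minimization expression, and observe that both the $\rho\sum_k x_k^i$ and $\sum_k \lambda_k^{i-1}$ terms cancel, with no induction needed. Your added remark about why the identity begins at $i=1$ rather than $i=0$ is a nice clarification but does not change the argument.
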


\begin{proof}
{% \small
\begin{align*}
    \sum_{k=1}^n \lambda_{k}^{i} & =  \sum_{k=1}^n (\lambda_{k}^{i-1} + \rho \cdot (x_k^{i} - z^{i})) \\
                                 & = \sum_{k=1}^n (\lambda_{k}^{i-1} + \rho \cdot (x_k^{i} - \frac{1}{n} (\sum_{j=1}^n x_j^{i} + \frac{1}{\rho} \sum_{j=1}^n \lambda_{j}^{i-1})) \\
                                 & = 0
\end{align*}
}
\end{proof}
% We will use this lemma to prove the convergence of the algorithm.
Now we will show that the consensus variable $z$ converges linearly to the true average. 
\begin{lemma} \label{lemma:convergence}
The consensus variable converges to the average linearly at a rate of $\frac{\rho}{\rho+2}$.
\end{lemma}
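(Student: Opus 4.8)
The plan is to reduce the ADMM iteration to a single scalar-style linear recurrence for the consensus variable $z^i$ and then show that its deviation from the true average $x^* = \frac{1}{n}\sum_{k=1}^n w_k$ contracts by a constant factor each step. The crucial leverage is Lemma~\ref{lemma:dual_zero}: once the dual variables sum to zero, the dual contribution to the $z$-update disappears, leaving a clean fixed-point iteration.

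First I would simplify the $z$-minimization. Writing $z^i = \frac{1}{n}\sum_{k=1}^n x_k^i + \frac{1}{n\rho}\sum_{k=1}^n \lambda_k^{i-1}$ and invoking Lemma~\ref{lemma:dual_zero} for the superscript $i-1 \geq 1$, the last sum vanishes, so $z^i = \frac{1}{n}\sum_{k=1}^n x_k^i$ for every $i \geq 2$. Next I would substitute the $x_k$-minimization $x_k^i = \frac{1}{2+\rho}(2w_k - \lambda_k^{i-1} + \rho z^{i-1})$ into this expression and sum over $k$. Applying Lemma~\ref{lemma:dual_zero} a second time to eliminate $\sum_k \lambda_k^{i-1}$, and using $\sum_k w_k = n x^*$, the iteration collapses to the one-step recurrence $z^i = \frac{2}{2+\rho}\,x^* + \frac{\rho}{2+\rho}\,z^{i-1}$, valid for all $i \geq 2$.

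Finally I would introduce the error $e^i = z^i - x^*$. Subtracting $x^*$ from both sides of the recurrence yields $e^i = \frac{\rho}{2+\rho}\,e^{i-1}$, and therefore $\|e^i\| = \left(\frac{\rho}{2+\rho}\right)^{i-1}\|e^1\|$. Since $\rho > 0$ forces $\frac{\rho}{2+\rho} < 1$, the error decays geometrically and $z^i \to x^*$ at the linear rate $\frac{\rho}{2+\rho}$, which is exactly the claimed statement.

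The main obstacle here is not analytical depth but index bookkeeping: the dual-sum cancellation is only available from $i = 2$ onward, because Lemma~\ref{lemma:dual_zero} guarantees $\sum_k \lambda_k^{j} = 0$ only for $j \geq 1$. Consequently the tidy recurrence and the contraction factor govern the \emph{tail} of the sequence, and the very first iterate $z^1$ enters the argument only through the constant $\|e^1\|$ in the geometric bound. I would also make explicit the standing assumption $\rho > 0$ so that the contraction factor is genuinely below one and the convergence is in fact linear rather than merely non-expansive.
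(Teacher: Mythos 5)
Your proposal is correct and follows essentially the same route as the paper's proof: eliminate the dual-variable sum via Lemma~\ref{lemma:dual_zero}, substitute the $x_k$-update into the $z$-update, and obtain the contraction $\Delta z^{i} = \frac{\rho}{\rho+2}\,\Delta z^{i-1}$. Your added care about the index range (the recurrence only kicks in at $i \geq 2$, since the dual-sum cancellation requires $i-1 \geq 1$) and the explicit assumption $\rho > 0$ are minor refinements the paper leaves implicit.
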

\begin{proof}
    We know the optimum value of the consensus variable $z^* = \frac{1}{n}\sum_{k=1}^n w_k$. Let us define a residual at $i$-th iteration, $\Delta z^{i}=z^{i} - z^*$.
    
    \begin{align*}
        \Delta z^{i} &=z^{i} - z^*\\
                       &= \frac{1}{n} (\sum_{k=1} ^n x_k^{i} + \frac{1}{\rho} \sum_{k=1}^n \lambda_k^{i-1}) - \frac{1}{n}\sum_{k=1}^n w_k\\
                       & = \frac{1}{n} \sum_{k=1}^n x_k^{i} -\frac{1}{n}\sum_{k=1}^n w_k\\
                      & = \frac{1}{n} \sum_{k=1}^n (\frac{1}{2+\rho} (2w_k -\lambda_k^{i-1} + \rho \cdot z^{i-1})) - \frac{1}{n}\sum_{k=1}^n w_k\\
                       & = \frac{\rho}{\rho+2} (z^{i-1} - \frac{1}{n}\sum_{k=1}^n w_k)\\
                       & = \frac{\rho}{\rho+2} (z^{i-1} - z^*) \\
          & \implies ||\Delta z^{i}|| \leq  \frac{\rho}{\rho+2} ||\Delta z^{i-1}|| 
    \end{align*}
\end{proof}

The third line follows Lemma~\ref{lemma:dual_zero}. $||\Delta z^{i}||$ represents $z^{i}$'s distance to $z^*$. %at $i$-th iteration.
Since $\frac{\rho}{\rho+2} < 1$, this achieves the linear convergence. 
%If $\rho$ is large, the ratio $\frac{\rho}{\rho+2}$ goes to 1, which may hurt the convergence. \bj{Comment out this line since this sentence seems to have no value in the context.}

% We see that if $\rho$ is large, the ratio $\frac{\rho}{\rho+2}$ goes to 1. So $\rho$ must not be very large. 
\section{Privacy Preserving Averaging} \label{sec:secure_aggregation}
% why we did not consider differential privacy or secure multi-party computation.
%\smf{ Existing approaches for ensuring the privacy of statistical analysis (such as aggregation for our purpose) are through Differential Privacy (DP) or Homomorphic Encryption. A randomized mechanism is differentially private over a dataset; if a single entry changes (adding or removing an entry) in the dataset, the output does not change much to reveal that entry. Although DP methods can theoretically provide a strong privacy guarantee, they suffer from privacy-accuracy trade-off \cite{gupta2020preserving}. This trade-off sometimes is too large for any meaningful computation. 
%Why not Homomorphic? Possible reasons: 1. expensive? 
%2. see https://arxiv.org/pdf/2004.01312.pdf (5th para of introduction)
%} \bj{Homomorphic encryption has a challenge in generating and distributing secret keys in a safe way. Prior work assumes this.}
%\bj{Not sure the above paragraph is in the right position... This should be in the intro, I think.}

We see for our problem, ADMM is easily implementable in a distributed environment and has good convergence. In Algorithm~\ref{alg:admm}, the privacy preservation may be assumed since participants do not directly share their private data (i.e., $w_k$).
% Since in any iteration of Algorithm~\ref{alg:admm}, the participants do not directly share their private data (i.e., $w_k$), we may assume that privacy is preserved.
In \cite{WEERADDANA20179502}, the authors claim that ADMM has some privacy guarantee. But in this section, we will show that Algorithm~\ref{alg:admm} is insufficient to preserve privacy in our threat model: a \textit{semi-trust} or \textit{honest-but-curious} threat model.

% The idea of transformation-based methods inspires our approach here. We will show that in our threat model, the transformation methods in previous works such as \cite{WEERADDANA20179502} might not be sufficient to preserve privacy.  Our assumed threat model is \textit{semi-trust} or \textit{honest-but-curious} model.

\begin{definition}[Honest-but-curious Threat Model\footnote{The formal treatment of the model is provided in Chapter 7 of \cite{Oded2009}.}]
In the honest-but-curious threat model, the participants are assumed to follow the protocol (honest) %(providing honesty) 
but simultaneously accumulate all the information they have seen, e.g., the messages sent to them (curious). %(thus providing curiosity). 
\end{definition}

% We refer to the reader to the formal treatment of the model in Chapter 7 of \cite{Oded2009}.

In the honest-but-curious model, a secure protocol prevents the participants from inferring the privacy-sensitive data of others even if they gather all communicated messages.

% a secure protocol is one in which the participants can not learn the privacy-sensitive part of the data by gathering all the communicated message.

\subsection{Privacy Analysis of the Classic ADMM-based Algorithm}
We show that Algorithm \ref{alg:admm} is not secure against \textit{honest-but-curious} participants.% thread model. 

\begin{lemma}
\label{lemma:privacy_1}
%Assuming 
In all-to-all communication, any honest-but-curious participant following Algorithm \ref{alg:admm} can retrieve the privacy-sensitive data (i.e., $w$)
when it collects messages for at least two consecutive iterations.
% if it receives messages from at least two consecutive iterations.
\end{lemma}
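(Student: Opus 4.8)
The plan is to show that an honest-but-curious participant, who by the all-to-all assumption receives every broadcast message $y_k^i = x_k^i + \frac{1}{\rho}\lambda_k^{i-1}$, can set up a solvable system for each private vector $w_k$ once it has logged two consecutive rounds. First I would observe that a single round is underdetermined: the adversary sees only the sum $y_k^i$, which couples the unknown primal $x_k^i$ (itself a function of $w_k$, $\lambda_k^{i-1}$, and $z^{i-1}$) with the unknown dual $\lambda_k^{i-1}$, so there are strictly more unknowns than observed quantities. The second round is what breaks the tie, and I would make this precise.

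The key step is to derive a closed form for the dual variable purely from broadcast data. Reading $\lambda_k^{i-1} = \rho(y_k^i - x_k^i)$ off the definition of $y_k^i$ and substituting it into the dual update $\lambda_k^i = \lambda_k^{i-1} + \rho(x_k^i - z^i)$ makes the $x_k^i$ terms cancel, leaving
\[
\lambda_k^i = \rho\,(y_k^i - z^i).
\]
Since the adversary computes $z^i = \frac{1}{n}\sum_{k} y_k^i$ directly from the round-$i$ broadcasts, it therefore knows $\lambda_k^i$ for every $k$.

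Next I would invoke the round-$(i{+}1)$ primal update $x_k^{i+1} = \frac{1}{2+\rho}(2w_k - \lambda_k^i + \rho z^i)$, which is the only place $w_k$ re-enters. Solving for the private vector gives $2w_k = (2+\rho)\,x_k^{i+1} + \lambda_k^i - \rho z^i$, where every term on the right is now known: $\lambda_k^i$ and $z^i$ from the previous step, and $x_k^{i+1} = y_k^{i+1} - \frac{1}{\rho}\lambda_k^i$ from the round-$(i{+}1)$ broadcast. Collecting terms expresses $w_k$ entirely in logged messages, e.g. $w_k = \tfrac12\bigl[(2+\rho)y_k^{i+1} - 2y_k^i + (2-\rho)z^i\bigr]$, so the adversary recovers $w_k$ for all $k$.

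The only real obstacle is recognizing the telescoping identity $\lambda_k^i = \rho(y_k^i - z^i)$; without it the two rounds look like coupled equations in the unknowns $w_k$, $\lambda_k^{i-1}$, and even $z^{i-1}$ (which is not observed when only rounds $i$ and $i{+}1$ are logged), and solvability is not apparent. Once that identity is in hand the remaining manipulations are linear and routine. I would close by noting that two rounds are both sufficient, as shown, and—since one round leaves the system underdetermined—essentially necessary, which is precisely what motivates the group-communication defense developed in the next section.
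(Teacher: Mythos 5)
Your proposal is correct and follows essentially the same route as the paper's proof: both hinge on the identity $\lambda_{k'}^{i} = \rho(y_{k'}^{i} - z^{i})$ obtained by combining the broadcast $y_{k'}^{i}$ with the dual update, and then recover $w_{k'}$ from the round-$(i{+}1)$ primal update. Your explicit closed form $w_k = \tfrac12\bigl[(2+\rho)y_k^{i+1} - 2y_k^i + (2-\rho)z^i\bigr]$ is a correct consolidation of the paper's final two equations.
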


\begin{proof}

Assume that a user $k$ is a \textit{honest-but-curious} participant and wants to know the private data of a user $k^\prime$, $w_{k^\prime}$. 
For simplicity, suppose that %the iteration counter $t = 1$ and 
the $\rho$ value are shared among all participants.
Note that the user $k$ knows $z^1$ and $\rho$ but it does not know $\lambda_{k^\prime}^0$.
Throughout the proof, boldface terms are constant terms with respect to the user $k$: i.e., the user $k$ knows the values of them.

In the first iteration, the user $k$ receives a message from the user $k^\prime$, $\mathbf{y_{k^\prime}^1}$.

% For simplicity, let us assume that device $N$ is a \textit{honest-but-curious} participant, i.e., it accumulates all the messages. Since it is also a participating device, it knows the algorithm. For example, it knows how the $x$,$z$, and $\lambda$ variables are updated. Suppose Device $N$ wants to learn the private data ($w_1$) of device $1$. 
% We assume that the iteration counter starts with 1, and the initial values of the models for device 1 are $\lambda_1^0$, $z^1$ and $\rho$. Note that device $N$ knows $z^1$ and $\rho$, but it does not have knowledge about $\lambda_1^0$.
% Throughout the proof, the boldface term is the constant term w.r.t. N, i.e., device $N$, knows the value of this term.

% For the first iteration, device $N$ receives messages from device 1 in the form.

\begin{equation}
\label{eqn:admm_privacy1}
    \mathbf{y_{k^\prime}^1} = x_{k^\prime}^1+\frac{1}{\boldsymbol{\rho}}\lambda_{k^\prime}^0
\end{equation}

The user $k$ knows the consensus value $\mathbf{z^1}$, So it can calculate $\lambda_{k^\prime}^1$ as follows.

\begin{align}
\label{eqn:admm_privacy2}
    \lambda_{k^\prime}^1 &= \lambda_{k^\prime}^{0} + {\boldsymbol{\rho}} (x_{k^\prime}^1-\mathbf{z^1}) \nonumber\\
                &= \lambda_{k^\prime}^{0} + {\boldsymbol{\rho}}(\mathbf{y_{k^\prime}^1}-\frac{1}{\boldsymbol{\rho}} \lambda_{k^\prime}^{0} - \mathbf{z_1^1})\nonumber\\
                &= \boldsymbol{\rho} (\mathbf{y_{k^\prime}^1}-\mathbf{z^1}) %\nonumber
\end{align}

At this point, the user $k$ cannot retrieve $w_{k^\prime}$ as Equation~\ref{eqn:admm_privacy1}  have two unknowns.
% Note that at this point, $N$ can not retrieve $w_1$ since too many unknowns in the system of equations.
In the second iteration, the user $k$ receives a following message from the user $k^\prime$.

\begin{equation}
    \mathbf{y_{k^\prime}^2} = x_{k^\prime}^2+\frac{1}{\boldsymbol{\rho}}\mathbf{\lambda_{k^\prime}^1} \label{eq1}
\end{equation}

%Then, the user $k$ can calculate $x_{k^\prime}^2$. 
Since $k$ knows how $x_{k^\prime}$ is updated, it may plug the value of $x_{k^\prime}$ into the following equation and obtain the private data $w_{k^\prime}$.

\begin{align}
    % x_{k^\prime}^{1} = \frac{1}{2+\boldsymbol{\rho}} (2w_{k^\prime}-\lambda_{k^\prime}^0 + \boldsymbol{\rho} z^0)\\
    \mathbf{x_{k^\prime}^{2}} = \frac{1}{2+\boldsymbol{\rho}} (2w_{k^\prime}-\mathbf{\lambda_{k^\prime}^1} + \boldsymbol{\rho} \mathbf{z^1}) \\
    \implies w_{k^\prime} = ((2+\boldsymbol{\rho}) \mathbf{x_{k^\prime}^{2}} + \mathbf{\lambda_{k^\prime}^1} - \boldsymbol{\rho} \mathbf{z^1}) / 2
\end{align}

%Too many unknowns in the system of equation. 
%({\color{green}MRR: should this be part of previous sentence? })

% Equations \ref{eq1} and \ref{eq2} have two unknowns. Solving these two equations, device $N$ gets hold of the private data of $1$.

\end{proof}

\subsection{Enhancing Privacy through the Gap in Communication}
We see that the privacy is not guaranteed in all-to-all communication (Lemma~\ref{lemma:privacy_1}).
A natural approach would then be to limit communication among users and minimize information to infer private data. We adopt this approach and systematically analyze it.
% We see that the all-to-all communication is privacy revealing in Lemma~\ref{lemma:privacy_1}.
% A natural approach would then be to limit communication among users and minimize information to infer private data. Here we will systematically analyze this approach.

In Algorithm~\ref{alg:admm}, 
% In distributed ADMM averaging (Algorithm \ref{alg:admm}), 
users communicate messages $y_k^i = x_k^i + \frac{1}{\rho} \lambda_k^{i-1}$, which is required to calculate the consensus variable $z^i$, the average of the $y_k^i$s.
In all-to-all communication, $z^i$ can be computed in a single communication round. 

However, we observe that all-to-all communication is not required. An alternative communication pattern can be used. 
For instance, we can partition users into groups; users only communicate their $y_k^i$ within a group $g$ and compute the intermediate $z_g^i$, the average $y_k^i$s in the group. Next, they communicate the intermediate $z_g^i$ across groups to compute the mean of $z_g^i$s, which turns to be the final $z^i$. 

%We see 
This communication scheme
% This scheme of message sharing 
does not affect the ADMM convergence. Interestingly, this group-based communication approach provides the desired \emph{privacy guarantee} if the groups follow a certain gap constraint.

% We note that for correctness, all-to-all communication is not necessary. For example, an alternative algorithm could be that in a certain iteration, we divide the participants into groups; they only communicate within the group and compute the intermediate $z_j^r$, the average of $y_k^r$s in their group. Next, they communicate across the group to compute the mean of $z_j^r$s, which turns to be the final $z^r$. We see this scheme of message sharing does not affect the convergence of ADMM. Interestingly, as we will show that this group formation approach also has a privacy guarantee if the groups follow a certain gap constraint.

\begin{definition}[\textbf{Gap}]
 Given any two users, a gap is the number of consecutive iterations, after which they communicate their $y$ messages.
\end{definition}

\begin{definition}[\textbf{Group}]
 A group of users is the participants who communicate their $y$ messages with each other in an iteration.
\end{definition}

We denote $t_g$ as the gap size, $g$ as the set of groups and $s$ as the number of participants in a group. Note that in Algorithm~\ref{alg:admm} $t_g=1$, $|g|=1$, and $s=n$.

A gap of $t_g$ means that no two participants can be in the same group for consecutive $t_g$ iterations of the ADMM aggregation. Let us now analyze the ADMM algorithm with the gap and group properties. Let $T_p$ be the number of iterations after which the privacy is revealed. %We focus on two quantities: the number of iterations after which the privacy is revealed ($T_p$) and the number of iterations needed to ensure ADMM convergence ($T_c$).

% A gap of $t_g$ means that no two participants can be in the same group in consecutive $t_g$ iterations of the ADMM aggregation. Let us now analyze any algorithms with the gap and group properties. We here focus on two quantities, $T_p$ and $T_c$, the number of iterations for which the privacy is revealed, and the number of iterations needed to ensure convergence.

\begin{theorem}
\label{lemma:max_iter}
       Assuming $s> \frac{t_g}{t_g-1}$, $T_p = 2t_g$
\end{theorem}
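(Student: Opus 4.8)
The plan is to prove the equality $T_p = 2t_g$ by two matching bounds: an explicit attack showing privacy must be lost by iteration $2t_g$, and a security argument showing it cannot be lost earlier. Throughout I would fix an honest-but-curious attacker $k$ targeting the secret $w_{k'}$ of some user $k'$, and let $i_1 < i_2 < \cdots$ be the iterations at which $k$ and $k'$ land in a common group; the gap constraint gives $i_{l+1} - i_l \ge t_g$. The only data $k$ ever receives about $k'$ are the messages $y_{k'}^{i_l} = x_{k'}^{i_l} + \frac{1}{\rho}\lambda_{k'}^{i_l - 1}$ at these iterations, together with the consensus values $z^i$ and the public constant $\rho$. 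Using the three ADMM updates (Equation~\ref{eqn:x_min} together with the $z$- and $\lambda$-steps), I would rewrite each received message as a linear constraint on the two unknowns that fully parametrize $k'$'s trajectory, namely $w_{k'}$ and the initial dual $\lambda_{k'}^0$. In particular, the $\lambda$-update lets $k$ eliminate $x_{k'}$ and recover $\lambda_{k'}^{i_l} = \rho(y_{k'}^{i_l} - z^{i_l})$ exactly, precisely as in the proof of Lemma~\ref{lemma:privacy_1}.

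For the attack direction ($T_p \le 2t_g$), I would generalize the two-step reconstruction of Lemma~\ref{lemma:privacy_1} across a gap. From the first shared iteration, $k$ obtains $\lambda_{k'}^{i_1}$ exactly; it then propagates the dual recursion forward to the iteration preceding the next useful observation, expressing the intervening dual as a known affine function of the single remaining unknown $w_{k'}$, where the coefficient of $w_{k'}$ accumulates the geometric factor of the recursion. Substituting this into the later message and invoking the $x$-minimization formula yields one scalar equation in $w_{k'}$. I would then verify that its coefficient is nonzero, which reduces to $\rho > 0$ (equivalently, the contraction factor $\frac{\rho}{\rho+2}$ of Lemma~\ref{lemma:convergence} being strictly below one), so $w_{k'}$ is fully determined. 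The gap-plus-group-size analysis is what places the first such fully linked pair of observations at iteration $2t_g$, and the hypothesis $s > \frac{t_g}{t_g-1}$ is what guarantees that a valid partition into size-$s$ groups realizing gap exactly $t_g$ actually exists, so that this spacing is attainable; I would make this precise through the correspondence with resolvable block designs.

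For the security direction ($T_p \ge 2t_g$), I would show that before iteration $2t_g$ the attacker has too few linked constraints to pin down $w_{k'}$: for any horizon $T < 2t_g$, I would exhibit a one-parameter family of pairs $(w_{k'}, \lambda_{k'}^0)$ consistent with every message $k$ has seen, so that $w_{k'}$ stays information-theoretically undetermined. The gap bounds the number and adjacency of usable observations inside the window, and the group-size condition again fixes the attainable schedule so the counting is tight. I expect this security half to be the main obstacle: establishing the rank-deficiency claim \emph{uniformly over all admissible schedules} requires carefully tracking exactly which duals and which consensus values the attacker can and cannot reconstruct under the gap, and pinning the threshold down to exactly $2t_g$ (rather than some nearby multiple of $t_g$) is the delicate step, in contrast to the comparatively mechanical attack construction.
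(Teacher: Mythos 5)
Your overall strategy --- an explicit reconstruction attack for the upper bound plus an indistinguishability argument for the lower bound --- is genuinely different from, and more ambitious than, what the paper does: the paper simply counts equations against unknowns over a window of $T$ iterations ($3T$ update equations versus $1 + 2T + (T - \frac{T}{t_g}) + 1$ unknowns) and solves the break-even condition to obtain $T_p = 2t_g$. But your proposal has two concrete problems. First, you misread the role of the hypothesis $s > \frac{t_g}{t_g-1}$. It is not there to guarantee that a partition into size-$s$ groups with gap $t_g$ exists (that is the job of the block-design construction in the next section). In the paper it neutralizes a leakage channel you have omitted entirely from the attacker's view: in every iteration the attacker also receives the intermediate group sums $z_{g}^{i} = \sum_{u \in g^{i}} y_u^{i}$ for the group containing $k'$, which give $T$ additional equations involving $y_{k'}^{i}$ even in iterations where $k$ and $k'$ are not grouped together. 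The hypothesis ensures these equations introduce at least $(T - \frac{T}{t_g})s > T$ fresh unknowns, so that subsystem is underdetermined and can be discarded. Your security direction, which asserts that the only data about $k'$ reaching the attacker are the $y$ messages at shared iterations plus the consensus values, is therefore arguing about the wrong observation set; without accounting for the group-sum channel, the one-parameter-family argument does not address the actual protocol.

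Second, the security half --- which you correctly identify as the crux --- is only a plan, not a proof: you never exhibit the consistent family, never establish the rank deficiency, and the threshold $2t_g$ is asserted rather than derived. Indeed your own attack sketch, taken literally, succeeds at the second shared iteration, which under the gap constraint can occur as early as iteration $t_g + 1 < 2t_g$; reconciling this with the claimed value $T_p = 2t_g$ requires the same scheduling convention the paper adopts (at most $\frac{T}{t_g}$ shared iterations in any window of length $T$), and you would need to make that convention explicit for your two bounds to meet. To be fair, the paper itself does not prove either direction rigorously --- equating the number of equations with the number of unknowns neither guarantees a unique solution nor rules one out at smaller $T$ --- so your instinct to split the claim into an attack and a security argument is sound; but as written the proposal neither reproduces the paper's counting argument nor closes the gaps it would need to close to stand on its own.
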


\begin{proof}
Let  $T$ and $i$ be the number of ADMM iterations and an iteration counter, respectively. Since we are free to choose $T$ (without loss of generality), we assume $T$ is a multiple of $t_g$. Also let $k$ be an \emph{honest-but-curious} participant who wants to learn the model parameter of $k^\prime$. From $k$'s point of view, the best case is when $k$ and $k^\prime$ are part of the same group for $T$ iterations, which provides the maximum information about $k^\prime$. At iteration $i$, $k$ has 3 equations concerning $k^\prime$ as follows.
\begin{align}
    & x_{k^\prime}^{i} = \frac{1}{2+\rho} (2w_{k^\prime}-\lambda_{k^\prime}^{i-1} + \rho z^{i-1}) \label{eq:gap_eq1} \\
    & y_{k^\prime}^{i} = x_{k^\prime}^{i} + \frac{1}{\rho} \lambda_{k^\prime}^{i-1} \label{eq:gap_eq2} \\
    & \lambda_{k^\prime}^{i} = \lambda_{k^\prime}^{i-1} + \rho \cdot (x_{k^\prime}^{i}-z^{i}) \label{eq:gap_eq3}
\end{align}

This gives a total $3T$ equations for $T$ iterations. %, each appearing $T$ times. 
Let us analyze the number of unknowns in the system of these $3T$ equations. In Equation~\ref{eq:gap_eq1}, $w_{k^\prime}$ is unknown across all the  $T$ iterations. Apart from that, we have two more  unknowns per iteration: $x_{k^\prime}^{i+1}$ and $\lambda_{k^\prime}^i$. 

Now let us turn to Equation~\ref{eq:gap_eq2}. Since the gap size is $t_g$, $k$ and $k^\prime$ could be in the same group in at most $\frac{T}{t_g}$ iterations. Let $P$ be the set of these iteration counters. 
For $j \in P$, $k$ knows %$k^\prime$'s 
$y_{k^\prime}^j$ messages. This leaves $(T-\frac{T}{t_g})$ new unknowns in Equation~\ref{eq:gap_eq2}. 

In Equation~\ref{eq:gap_eq3}, $\lambda_{k^\prime}^{T+1}$  is a new variables at the $T$-th iteration. Summing all these, the number of total unknowns across the $T$ iterations is $1+2T + (T-\frac{T}{t_g}) + 1= \frac{3Tt_g-T}{t_g}+2$.

Retrieving the private data of $k^\prime$ is now equivalent to find a unique solution of this system of $3T$ linear equations. In general the unique solution exists if the number of unknowns matches to the number of equations as follows. %At this iteration (i.e., $T_p$), $k$ would have the private value of $k^\prime$.

\begin{align}
\begin{split}
    \frac{3T_pt_g-T_p}{t_g}+2&=3T_p\\
    \implies T_p&=2t_g
\end{split}
\end{align}

We also have $T$ more equations concerning $y_{k^\prime}^i$, the intermediate group mean of the form $z_{g_j}=\sum_{u \in g^i} y_u^i$. Here $g^i$ is the group in which $k^\prime$ belongs at $i$-th iteration. 
When $k$ and $k^\prime$ are in the same group (this happens at most in $\frac{T}{t_g}$ iterations), there are no unknowns. %Since $k$ knows the transmitted messages of all of the members of its group.
In  other $(T-\frac{T}{t_g})$ iterations, the number of unknowns is equal to the size of the group that $k^\prime$ belongs to (i.e., $s$). So the total number of unknowns here is at least $ (T-\frac{T}{t_g})s$.
%We will first show that if $s >2$, then $\frac{T}{t_g}*(s-2)+ (T-\frac{T}{t_g})*(s-1) > T$. That is the number of unknowns is larger than the number of equations. 
We find out the condition of $s$ with which this system of equations becomes over-determined.

\begin{align}
\centering
\begin{split}
    &(T-\frac{T}{t_g})s > T \\
    % &\equiv \frac{(t_g-1)s}{t_g}  > 1 + \frac{1}{t_g}\\
    &\equiv s  > \frac{t_g}{t_g-1}
\end{split}
\end{align}

This holds if $t_g>1$. Assuming $s > \frac{t_g}{t_g-1}$, the intermediate equations can be discarded from the attacker's point of view.

\end{proof}
%This theorem generalize lemma \ref{lemma:privacy_1}, as plugging $t_g=1$, it provides $T_p = 2$.

%\subsection{The modified ADMM Algorithm}
%Here we modify the Algorithm \ref{alg:admm} using the knowledge of the two lemmas. We will first present a generic algorithm and then discuss practical implementation of the algorithm. 

%\subsection{Connections to Differential Privacy}

%{\color{red} Muntasir: For example, can we add some type of noise to the z parameters before the aggregation? In our privacy mechanism, in some sense the $\lambda$ could be some form of local DP. Can we make any such connections rigorous?}

%\subsection{ Privacy Metric}{\color{red} Muntasir We need to start thinking about our privacy metrics. For differential privacy the $\epsilon$ budget quantifies the privacy leakage, and is a tunable parameter. What is the corresponding metric for our communication limiting mechanism? One metric could be the gap. We could say given $n$ sites, and a cluster count $k$, let $C(n,k, g)$ denote the number of possible ways to group the sites into $k$ clusters such that no two sites are in the same cluster form $g$ consecutive iterations. Then something like $\log C(n,k,g)$ could define the amount of information needed and be a measure of privacy. Intuitively the more distinct patterns there are the more likely  it is to be able to isolate same sites from talking to each other. Roughly this would make it easier to be privacy-preserving with more and more sites or devices.}

%\subsection{Trade-off between privacy and accuracy}

\section{The ``Group'' construction algorithm}

\begin{figure}
    \centering
    \hfill
    \begin{subfigure}[b]{0.4\linewidth}
    \includegraphics[width=\linewidth]{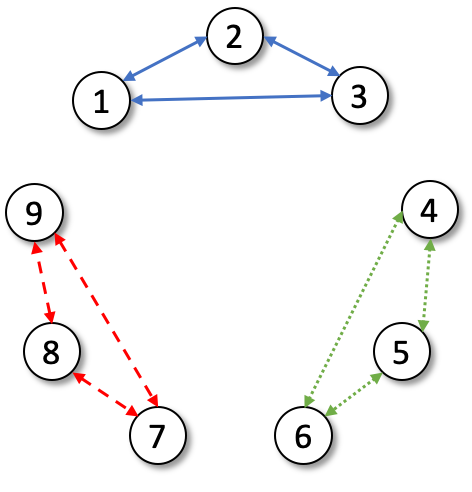}
    \end{subfigure}
    \hfill
    \begin{subfigure}[b]{0.4\linewidth}
    \includegraphics[width=\linewidth]{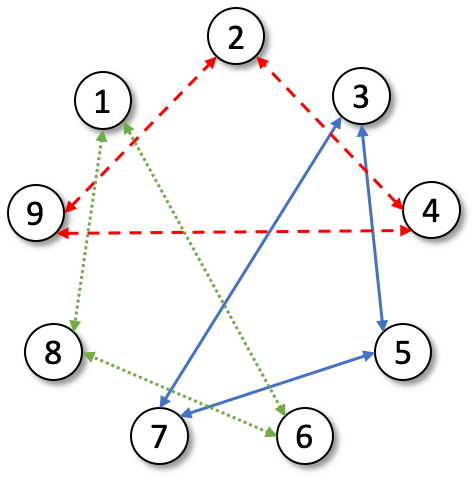}
    \end{subfigure}
    \hfill

    \hfill
    \begin{subfigure}[b]{0.4\linewidth}
    \includegraphics[width=\linewidth]{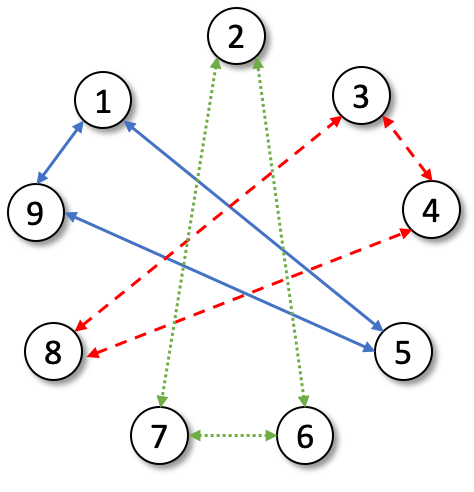}
    \end{subfigure}
    \hfill
    \begin{subfigure}[b]{0.4\linewidth}
    \includegraphics[width=\linewidth]{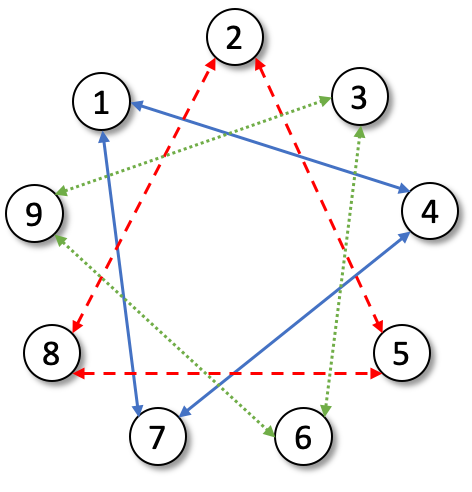}
    \end{subfigure}
    \hfill
    % \caption{Kirkman triple groups for 9 workers}
    \caption{Communication Group Partition Example for 9 Users}
    \label{fig:group_example}
\end{figure}

Figure~\ref{fig:group_example} shows an example of our group construction: a set of partitions of the nine users $\{1,\dots,9\}$ into groups (of size $s=3$) with a gap constraint.
Each of the four partitions corresponds to a communication scheme in an ADMM iteration.
The members of a group (triangles) are free to communicate their $y$ values within themselves in an iteration.
These 4 partitions create a communication gap ($t_g = 4$) over the ADMM aggregation.
According to Theorem~\ref{lemma:max_iter}, users in Figure~\ref{fig:group_example} do not reveal privacy information if the aggregation converges in less than eight iterations ($2t_g = 8$).
% then the communication pattern in Figure~\ref{fig:group_example} won't be privacy revealing. 

%Figure~\ref{fig:group_example} shows an example of such partitions.
%then a possible set of partitions could be shown in Figure~\ref{fig:group_example}.
% \begin{align*}
%  &\{\{1, 2, 3\}, \{4, 5, 6\}, \{7, 8, 9\}\}\\
%  &\{\{1, 4, 7\}, \{2, 5, 8\}, \{3, 6, 9\}\}\\
%  &\{\{1, 5, 9\}, \{2, 6, 7\}, \{3, 4, 8\}\}\\
%  &\{\{1, 6, 8\}, \{2, 4, 9\}, \{3, 5, 7\}\}  
% \end{align*}

\subsection{Connection to Combinatorial Design Theory}
%We now turn to the computational issues of finding such partitions. 
We connect our group formation problem to a well-known problem in combinatorial design theory. We need a couple of definitions from the combinatorial block design theory.

\begin{definition}[Balanced Incomplete Block Design (BIBD)]
A balanced incomplete block design (BIBD) $B(n,s,t)$ is a pair of $(N,q)$ such that,
\begin{itemize}
    \item $N$ is a set of $n$ elements.
    \item $q$ is a collection of subsets of $N$.
    \item Each element of $q$ (called blocks) is of size exactly $s$.
    \item Every pair of distinct elements of $N$ is contained in exactly $t$ blocks of $q$. %\bj{Not clear..} \smf{let $(i,j) \in N$, then (i,j) can be in exactly $t$ blocks in $q$.}
\end{itemize}
\end{definition}

If $t=1$, the blocks are pairwise disjoint. We define a parallel class as follows.

\begin{definition}[Parallel Classes and Resolution]
A subset  $\pi \subseteq q$ is called a partial parallel class provided that the blocks of $\pi$ are pairwise disjoint. 
If $\pi$ partitions $N$ (i.e., the intersection of the blocks are pairwise empty and the union of them covers $N$), 
%\smf{union is not enough to get a partition you have to have the pairwise empty intersection.} \bj{okay}
then $\pi$ is a parallel class. The number of parallel classes in $q$ is called a resolution.
% If $\pi$ partitions $N$, then $\pi$ is a parallel class. The number of parallel classes of $N$ in $q$ is called resolution.
$(N,q)$ is said to be a resolvable BIBD if $q$ has at least one resolution. %\bj{Looked at \cite{stinson2007combinatorial}. $q$ has a least one resolution.}
% $(N,q)$ is said to be resolvable BIBD if $N$ has at least one resolution in $q$.
%\bj{Resolution is defined w.r.t. $q$, right? Then this sentence is weird.} \smf{does it make sense now?}
\end{definition}

% \bj{Need to re-write below as in the definition?} \smf{You mean merge with the definition?} \bj{No. e.g., the next sentence $q$ has a parallel class. Is this right?}\smf{What about now?} \bj{Changed the definition sentence.}
$q$ may have more than one parallel classes, i.e., more than one partitions of $N$.
For example, if $N=\{1,2,3,4\}$ then $q=\{\{1,2\},\{3,4\},\{1,3\},\{2,4\}\}$ contains two parallel classes (partitions) of $N$: $\{\{1,2\},\{3,4\}\}$ and $\{\{1,3\},\{2,4\}\}$. The resolution of $q$ is 2. 
We refer the reader to \cite{stinson2007combinatorial} for comprehensive references of block design problems.

With these definitions, we are now ready to establish a connection between our group construction and a balanced incomplete block design problem. 

\begin{lemma}
A BIBD $B(n,s,1)$ with resolution $k$ implies a gap of $k$ with group size $s$.
\end{lemma}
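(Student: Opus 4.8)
The plan is to exhibit an explicit communication schedule built directly from the resolvable design: use each parallel class as the group partition for one ADMM iteration, cycle through the classes periodically, and then verify that this schedule realizes group size $s$ and gap exactly $k$. The two defining features of $B(n,s,1)$ I will lean on are that every block has size $s$ and that every pair of users lies in exactly one block ($t=1$).

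First I would invoke resolvability. By hypothesis the block collection $q$ decomposes into $k$ parallel classes $\pi_1,\dots,\pi_k$ that partition $q$. I assign class $\pi_j$ to iteration $j$ and extend periodically, so iteration $i$ uses the class indexed by $((i-1)\bmod k)+1$. Since each $\pi_j$ is a parallel class, it partitions $N$ into blocks, and by the BIBD definition each block has size exactly $s$. Hence in every iteration the communication groups form a partition of the $n$ users into groups of size $s$, matching the required group size.

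Next I would control the gap using the $t=1$ condition. Fix any two distinct users $a,b$. By the defining property of a $B(n,s,1)$ design, the pair $\{a,b\}$ lies in exactly one block of $q$. Because the parallel classes partition $q$, that unique block belongs to exactly one class $\pi_j$; consequently $a$ and $b$ share a group in iteration $j$ and in no other class. Under the periodic schedule this means $a$ and $b$ are grouped together precisely in iterations $j,\,j+k,\,j+2k,\dots$, so the spacing between two consecutive iterations in which they are co-located is exactly $k$. By the definition of gap this gives a gap of $k$ for the pair $\{a,b\}$, and since the pair was arbitrary, the whole schedule has gap $k$.

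The routine parts are the size and partition checks, which follow immediately from the definitions of parallel class and block size. The one step demanding care — and the main conceptual point — is converting the static combinatorial fact ``each pair occurs in exactly one block'' into the dynamic statement ``each pair is co-located once per period of $k$ iterations.'' This relies on using two facts together: that the classes form a partition of $q$ (so the pair's unique block lands in a single class) and that the schedule cycles through all $k$ classes. Once both are in place, the gap is forced to be exactly $k$, completing the argument.
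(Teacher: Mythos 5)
Your proof is correct and takes essentially the same route as the paper's: assign each parallel class to one ADMM iteration so that the blocks (each of size $s$) become the communication groups, and the $k$ classes yield the gap of $k$. You are in fact more careful than the paper at the key point --- you invoke the $t=1$ condition to show each pair of users shares a block in exactly one parallel class, which is what actually forces the gap to equal $k$, whereas the paper's proof only states that the partitions ``provide a communication scheme in each iteration'' without spelling this out.
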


\begin{proof}
Since $t=1$, the blocks are pairwise disjoint. A resolution of $k$ ensures $k$ partitions of all users with the block size of exactly $s$. These blocks are groups in our construction.
Each partition provides a communication scheme in each iteration.
\end{proof}

Naturally we want to maximize the resolution since a large gap size allows more iterations under the privacy guarantee (Theorem~\ref{lemma:max_iter}) and more iterations minimize estimation error (Lemma~\ref{lemma:convergence}). 

A special case of $B(n,s,1)$ is when $n \equiv 3 \bmod 6$ and $s=3$. This is known as a Kirkman triple system~\cite{abel1996kirkman}. 
The goal is to find $\frac{n-3}{2}+1$ parallel classes. 
Ray-Chaudhuri~et~al.~\cite{ray1971solution} propose a solution for Kirkman triple systems.
% The goal is to find $\frac{n-3}{2}+1$ parallel classes. There is a solution to Kirkman's triple system for all $n \equiv 3 \bmod  6$ \cite{ray1971solution}. 
If $s=4$, the corresponding design problem is known as the Social Golfer Problem~\cite{colbourn2010crc}.
The task here is to maximize the number of parallel classes for any $n$ divisible by 4.
Unfortunately, this problem still remains unsolved~\cite{soc_golf_mathworld}.  

%We can formulate a Kirkman triple system as a constrained optimization problem.
% One can use constrained programming solver to construct a solution. 
A constrained programming solver can be used to construct a solution.
In \cite{barnier2005solving}, the authors propose different pruning strategies to speed up the problem-solving time. But these techniques are based on exhaustive search. They scale poorly as $n$ increases. %\bj{with large $n$?} \smf{is it clear now?}

Recently there have been works on the randomized analysis of combinatorial design problems~ \cite{kwan2016almost,ferber2019almost,keevash2018hypergraph}.
Motivated by them, we design a randomized heuristic algorithm to solve a Kirkman triple system problem.
%Although our algorithm is not guaranteed to find partitions with optimal resolution, empirically it still provides a decent number of solutions.
Although our algorithm is not guaranteed to find partitions with the optimal resolution, it does provide partitions that  guarantee the desired privacy.
% For our purpose, we only require a few partitions.
\vspace{0cm}
\subsection{Randomized Group Construction Algorithm} \label{sec:randomized_grouping_algorithm}

%We now describe our randomized heuristic algorithm to achieve a good solution to Kirkman triple system. 
We intend to generate a collection of partitions ($\Pi$) of $N$. Here $N$ is a vertex set of $n$ users. Each of these partitions consists of pairwise disjoint groups of size 3. Before discussing our randomized algorithm, let us first state a few preliminaries. 

Let $G$ be a complete graph, and $H$ be a 3-uniform hypergraph; both defined on the vertex set $N$. In graph theory, a \emph{matching} in a hypergraph is a collection of independent sets of edges, i.e., the matching edges are vertex disjoint. A \emph{perfect matching} is one that covers the entire vertex set. A \emph{clique} of size $k$ in a graph is a complete subgraph with $k$ vertices.  A \emph{k-clique} partition of a graph is a partition of the set of vertices into cliques of size $k$. If $k=3$, then it is called a triangle, and a corresponding partition is called a triangle partition.
%\bj{Not clear in below. What's the output of this algorithm? This should be clearly mentioned.}

We observe that a parallel class in a Kirkman triple system is a perfect matching in $H$. 
%A parallel class of Kirkman triple system corresponds to a perfect matching in $H$. \bj{duplicate of the above sentence}
These matching edges also correspond to a triangle partition $\pi$ of $G$.

\begin{algorithm}[htb]
\caption{Randomized-KirkmanTriple}
\label{alg:rand-bibd}
\begin{algorithmic}
%initialize
%\STATE $v=6n+3$
\State $G$: Complete graph with nodes $n$
\State $N:\{1,2,\dots,n\}$
%\State $resolution = 0$
% \While{Stopping Criteria not met}
\State $\Pi = \emptyset$, $\pi = \emptyset$
\Repeat
    % \State $Q=$ Randomly sample $3$ vertices from $N$ without replacement
    \State $Q$: Randomly sampled $3$ vertices from $N$ without replacement
        % \If {sampled vertices forms a triangle in $G$} \Comment{A triangle is found}
        \If {$Q$ forms a triangle in $G$} \Comment{A triangle is found}
        \State $\pi = \pi \cup Q$
        \State $N = N - Q$
        \State Remove the edges of $\pi$ from $G$ 
        \If{ $N = \emptyset$} \Comment{A partition is found}
            \State $\Pi = \Pi \cup \pi$
            \State $\pi = \emptyset$
            %\State resolution = resolution + 1
            \State $N=\{1,2,\dots,n\}$ \Comment{Reset $N$}
        %\ElsIf{no triangles in successive $T$ iterations}
         %   \State Add the triangles of $P$ back in $G$
          %  \State $N=\{1,2,\dots,n\}$
        \EndIf
    \EndIf
\Until{Stopping criteria met}
% \EndWhile
\end{algorithmic}
\end{algorithm}

Algorithm~\ref{alg:rand-bibd} shows a randomized heuristic algorithm to generate parallel classes in a Kirkman triple system.
We start with the complete graph $G$. We randomly sample a triangle $Q$ from $G$. Then we remove $Q$ from $N$ and all the edges of $Q$ from $G$.
% Now think of the following simple randomized algorithm shown in Algorithm~\ref{alg:rand-bibd}. 
% We start with $G$, the complete graph. We randomly sample a triangle $Q$ from $G$ and remove all the edges of $Q$ from $G$. We also remove $Q$ from $N$. 
We keep repeating this process until we find a triangle partition (i.e., when $N$ becomes empty). 
In this case, we add the triangle partition $\pi$ into $\Pi$, reset $N$, and start sampling again. 
% \bj{This is not presented in Algorithm~\ref{alg:rand-bibd}.} \smf{updated. please check if I understand your comment.}
We continue this process until we reach the stopping criteria.

In the algorithm, we remove the edges of $Q$ from $G$ when we find a triangle $Q$. That guarantees the desired pairwise disjointness.

%A few theoretical properties of a Kirkman triple system as a triangle removal process in a complete graph have been addressed in \cite{kwan2016almost,ferber2019almost}\bj{I don't understand the sentence}. For example, the authors in \cite{kwan2016almost} show that almost all Kirkman triple systems have many perfect matchings. 

%\bj{Duplicates? Some of them mentioned above.}
%Algorithm \ref{alg:rand-bibd} shows our randomized solution to Kirkman triple system.
%Here $n$ is the number of users and $s$ is the group size. The set of user is $N$. We assume $n$ is divisible by $s$. The algorithm consists of phases. In each phase, we intend to find a triangle partition of a graph $G$. Initially $G$ is a complete graph of $n$ vertices.

%In each iteration, we sample 3 vertices without replacement from $N$. We check whether these sampled vertices form a triangle in $G$; if it forms a triangle% \bj{The term triangle is well-defined? Isn't clique better?}
%, we have found a group. We add this group to $P$ and remove the sampled vertices from $N$ and the triangle from $G$. If $N$ is empty, we have found a parallel class, i.e., a partition of $N$. In this case, the current phase ends. We save this partition and reset $N$. In any phase, the triangle removal process may not succeed. The random sampling might fail to detect any triangles, or there exists no triangles. If after $T$ iterations we can not find any triangles, we do a rollback to the beginning of the phase, i.e., we add the triangles of $P$ back into $G$ and reset $N$. We continue this until stopping criteria are met.
Note that we can easily extend the triangle removal technique to the $s$-clique removal for an arbitrary $s$. Instead of sampling a triangle, we can sample a $s$-clique. 
In Section~\ref{eval}, we will show the performance of the algorithm, i.e., the number of generated partitions 
%it generates 
with different $n$ for triangles and $4$-cliques.% For arbitrary $s$, a parallel class in BIBD is a perfect matching in $s-$uniform hypergraph constructed from $n$ nodes. A clique of size $k$ of graph $G$ is a complete subgraph of $G$ with $k$ vertices. Similar to triangles, a matching in a $s-$uniform hypergraph corresponds to a clique partition of a complete graph $G$, where each clique is of size $s$. Thus in the Algorithm \ref{alg:rand-bibd}, we can replace the triangle removal to a clique removal. 
\section{Privacy-preserving Decentralized Federated Learning Algorithm}

%\bj{Remove the term "leader" since using it may make our algorithm not being considered as "fully decentralized" protocol.}

\begin{algorithm}
\caption{Privacy-preserving Decentralized FL Algorithm}
\label{alg:decentralized_secure_fl_algorithm}
\begin{algorithmic}
\State $N$: a set of peers, $n = |N|$ %: the number of total peers
%\State Comm. pattern $\pi = \{P: $ partitions of groups (\smf{groups or ther workers}) generated by
%\Statex \hskip9em Algorithm~\ref{alg:rand-bibd}\}
%\smf{Is the following more appropriate?}
%  \smf{changed the partition set definition. see if it is okay.}
\State Comm. pattern $\Pi = \{\pi\}$, a set of partitions $\pi$ of $N$ into  
% \Statex \hskip9em  
groups generated by Algorithm~\ref{alg:rand-bibd}

\State
\ForAll{peer $k$ \textbf{in parallel}}

    \State initialize $w_k^0$
    \State $w^0 = \mathit{SecureAggregation}(k, w_k^0)$
    \For{each FL round $r = 1, 2, ...$}
        \State $w_k^r  = Update(k, w^{r-1})$
        % \smf{the return of the following fn should not have $k$ subscript. this is the global model. Right?} \bj{This is the global model. The local model is replaced with it.}
        \State $w^r  = \mathit{SecureAggregation}(k, w_k^r)$
    \EndFor
    % \For{each round $t = 0, 1, 2, ...$}
    %     \State $w^k_t \gets \mathit{SecureAggregation}(k, w^k_t)$
    %     \State $w^k_t \gets Update(k, w^k_t)$
    % \EndFor
\EndFor
\State

\Function{$\mathit{SecureAggregation}$}{$k, w_k$}
    \State initialize $\lambda^0_k$
    \State $z^0 =  \pmb{0}$
    
    \For{iteration $i = 1 $ to $I$} \Comment{$I$: max iterations}
        \State $\pi$: $(i \bmod |\Pi|)$-th partition in $\Pi$ \Comment{$|\Pi|$: gap size}
        \State $g$: group in $\pi$ such that $k \in g$
        % \State
        % \State $g = \{k^\prime: k^\prime \in g \quad s.t. \quad k \in g\, \land g \in P\}$
        \State $x_k^{i} = \frac{1}{2+\rho} (2w_k-\lambda_k^{i-1} + \rho z^{i-1})$
        \Comment{$x$ minimization}
        \State Send $y_k^i = x_{k}^{i} + \frac{1}{\rho} \lambda_{k}^{i-1}$ to other peers in $g$
        % \smf{ Isn't the sum on $k^\prime$?} \bj{$k$ is used as an argument of this function. I used $k^\prime$ to present users in a group $g$} \smf{Then it should be $k \in g$?}
        % \bj{Hmm... if we use $k$ for this, no confusion for $k$? We can use another symbol like $u$. How about this?} \smf{u is better I guess} 
        \State $z_g^{i} = \frac{1}{n} \sum_{u \in g} y_u^i $ \Comment{Partial $z$ sum}
        \State Send $z_g^{i}$ to the other groups ($\neq g$) %\Comment{inter-group comm.}
        \State $z^{i} = \sum_{h \in \pi} z_{h}^{i}$ \Comment{Final $z$ sum}
        \State $\lambda_k^{i} = \lambda_k^{i-1} + \rho \cdot (x_k^{i}-z^{i})$ \Comment{$\lambda$ update}
    \EndFor
    \State \textbf{return} $z^I$ 
    
    % \For{iteration $i = 1 $ to $I$} \Comment{$I$: max iterations}
    %     % \State $g = \pi_i$ \Comment{$\pi_i$: $i$-th partition s.t. $k \in \pi_i$}
    %     \State $P$: $(i \bmod |\pi|)$-th partition in $\pi$ \Comment{$|\pi|$: gap size}
    %     \State $g$: group in $P$ such that $k \in P$
    %     % \State $g = \{k^\prime: k^\prime \in g \quad s.t. \quad k \in g\, \land g \in P\}$
    %     \State $x_k^{i} = \frac{1}{2+\rho} (2w_k-\lambda_k^{i-1} + \rho z^{i-1})$
    %     \Comment{$x$ minimization}
    %     \State $z_g^{i} = \frac{1}{n} \sum_{k^\prime \in g} (x_{k^\prime}^{i} + \frac{1}{\rho} \lambda_{k^\prime}^{i-1} )$ \Comment{partial $z$ sum}
    %     \If{$k$ is the leader in $g$}
    %         \State $z^{i} = \sum_{g^\prime \in P} z_{g^\prime}^{i}$ \Comment{final $z$ sum}
    %         \State Distribute $z^{i}$ to workers $\in g$
    %     \EndIf
    %     \State $\lambda_k^{i} = \lambda_k^{i-1} + \rho \cdot (x_k^{i}-z^{i})$ \Comment{$\lambda$ update}
    % \EndFor
    % \State \textbf{return} $z^I$ 
\EndFunction

% \Function{$\mathit{SecureAggregation}$}{$k, w$}
%     \State initialize $\lambda^k_0$
%     \State Communication pattern $\pi$
    
%     \Repeat
%     \ForAll{worker $k$}
%         \State $x_k^{r+1} = \frac{1}{2+\rho} (2w_k-\lambda_k^r + \rho z^r)$ \Comment{$x$ minimization}
%     \EndFor
%         \ForAll{Group $g \in \pi^r$}
%             \State $z_g^{r+1} = \frac{1}{n} \sum_{k \in g} (x_k^{r+1} + \frac{1}{\rho} \lambda_k^r )$ \Comment{$n$: \#workers}
%         \EndFor
%         \State Compute $z^{r+1} = \sum_{g} z_g^{r+1}$
%     \ForAll{worker $k$}
%         \State $\lambda_k^{r+1} = \lambda_k^r + \rho \cdot (x_k^{r+1}-z^{r+1})$ \Comment{$\lambda$ update}
%     \EndFor
%     \Until{stopping criteria met}
% \EndFunction
    
\State
\Function{$Update$}{$k, w$}
    \For{local epoch $e = 1 $ to $E$} \Comment{$E$: \# local epochs}
        \For{mini batch $b \in \text{local dataset } D^k$}
            \State $w = w - \eta \nabla l(w;b)$ \Comment{SGD} %\Comment{Stochastic Gradient Descent}
        \EndFor
    \EndFor
    \State \textbf{return} $w$
\EndFunction

\end{algorithmic}
\end{algorithm}

%Our proposed privacy-preserving decentralized federated learning algorithms is based on the standard federated learning algorithm \cite{mcmahan2017communication}.
%We modify the centralized aggregation in the original algorithm with our privacy preserving aggregation method (\S~\ref{sec:secure_aggregation}).
Algorithm~\ref{alg:decentralized_secure_fl_algorithm} shows our decentralized federated learning protocol (\sysname) that preserves the privacy of each peer's local model parameters. % of local models trained by each peer.
Each peer initializes its local parameters and runs the secure ADMM-based aggregation ($SecureAggregation$) to agree on the initial global parameters. We will explain how the aggregation is performed later.

Similar to the classic federated learning, our algorithm consists of multiple rounds. 
In each FL round, every peer trains the model on its local dataset and updates the model parameters ($Update$). 
Next the peers synchronize the locally trained model parameters.
Instead of sending out the trained local model parameters to a central aggregator, the peers work together to compute the average via our ADMM-based secure aggregation algorithm ($SecureAggregation$).
After the aggregation finishes, all peers move to the next round.
This process repeats until the model converges.

%\bj{Put a visualization example of communication with 9 workers for a single ADMM iteration.}
\begin{figure}[!t]
    \centering
    \includegraphics[width=\linewidth]{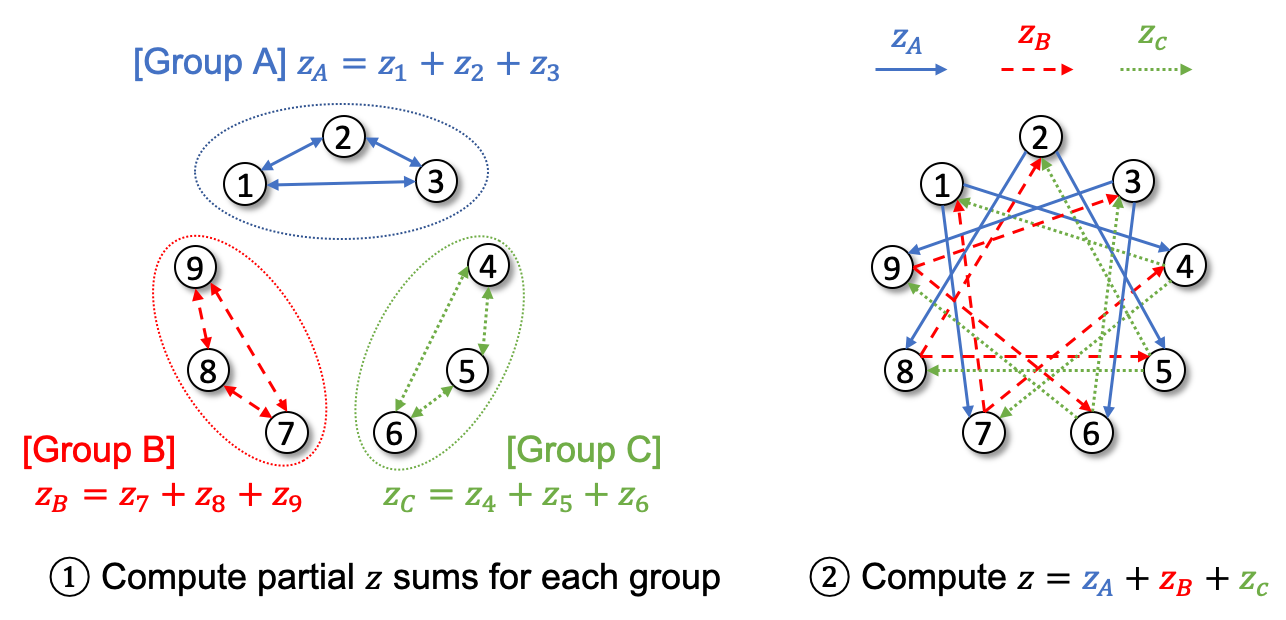}
    \caption{A Communication Example in ADMM Aggregation}
    % \caption{\dadmm Aggregation Process Example via Communication Pattern in a Single Iteration}
    \label{fig:admm_aggregation_example}
\end{figure}

%This is synchronous algorithm, where in each round the each worker updates the model parameters over its local dataset ($Update$) and synchronizes the locally updated parameters via our secure ADMM aggregation by averaging update parameters on all workers. A single round is made of the local parameter update and secure ADMM aggregation. Workers repeat this process.

% This aggregation is run in a synchronous distributed manner.
Our secure ADMM-based aggregation consists of multiple iterations. Figure~\ref{fig:admm_aggregation_example} shows a communication example in a single ADMM iteration.
In each iteration, every peer performs the $x$ minimization and communicates its $y$ value with the other peers in the same group to compute the group's partial $z$ sum. 
Then peers in one group send its group's partial $z$ sum to peers in the other groups (inter-group communication) to compute the final $z$ sum.
% by using group information of the partition in the next \dadmm iteration.
% \smf{What you mean ...by using group information of... in previous line.}
% \smf{I did not understand the line. The workers to ..} The workers to communicate with are decided by using group information of the partition in the next \dadmm iteration. 
% That allows workers in all groups to exchange partial $z$ sums and calculate the final $z$ sum.
Peers update their $\lambda$s with the final $z$ sum and this concludes the iteration.

Iterations are repeated until the predefined maximum iterations $I$. %\smf{Is it $I$?}.
The $z^I$ would be an estimate of average of all peers' local model parameters. %of the peers. %$w_k$.

% \smf{not sure next two lines add any values.}
% A communication pattern is a sequence of partitions, consisting of groups that split the workers.
% These partitions in the communication pattern are used sequentially for every iteration.%The groups are determined by a communication pattern, which is a sequence of partitions that splits the all the workers into groups. 

%When the partial $z$ sums for each group are obtained, workers in a group send the partial $z$ sum of its group to workers in the other groups for computing the final $z$ sum.

We assume that the peers know their communication pattern generated by Algorithm~\ref{alg:rand-bibd}.
% \bj{
This can be achieved easily. For example, the same random seed can be distributed over all peers at the beginning. 
Then each peer generates a communication pattern with the same randomized group construction and follows the same communication pattern across the entire ADMM iterations.
The communication pattern itself does not relate to any privacy-sensitive data. 
In contrast to the secret key generation in homomorphic encrpytion, the random seed generation and distribution can be done without any privacy concern.
% }
% We can achieve this in several ways. Each peer may run the randomized group construction algorithm with the same shared random seed. Another way could be through a bootstrap protocol that generates and distributes the communication pattern to all peers at the beginning. \bj{This bootstrap protocol is not clear.}
% Thus peers would follow the same communication pattern across the entire ADMM iterations.

% We assume that the workers know their communication patterns. This can be achieved in a number of ways. Each worker may run the randomized group construction algorithm with same random seed. Thus the workers would follow the same communication patterns across the ADMM iterations. Another way could be through a bootstrap protocol that would distribute the communication patterns to each worker at the beginning. 
% The ADMM aggregation consists of multiple iterations. In each iteration, all the workers in a group communicate each other to compute the partial $z$ summation for each group. 
%Then leaders for each group (e.g., workers with the lowest id within each group) communicate together to compute the final $z$ summation.
%This concludes a round and rounds are repeated until the convergence.

%\bj{Add the explanation of ADMM aggregation in detail. e.g., how communication is done via grouping.}
%\smf{I have updated the writing a bit. Please check for errors}
%\input{multitasklearning.tex}
%\input{architecture.tex}
\section{Evaluation}
\label{eval}

%Our proposed method uses the expectation on aggregation results. To see how this expectation influences model performance, 
% To evaluate our proposed method, 
We implement \sysname %a tool 
on top of PyTorch \cite{pytorch} v1.5.1 that simulates the federated learning for image classification and language model tasks.
We run experiments on our local machine with an Intel i9-7960X CPU, 128 GB memory, and four NVIDIA GTX 2080 GPUs. %\smf{Do we need the total memory information? Is it a personal machine or cloud. do we need to cite then the cloud access?} \bj{This is our local machine. Added}

In the evaluation, we answer the following questions:
(i) How does \sysname influence the quality of trained models (with respect to test accuracy)?
% 1. How does \dadmm influence the ML training?
(ii) How does the number of ADMM iterations affect the estimation quality?
%3. How do the hyperparameters of \dadmm affect the convergence? \bj{Is this necessary?} 
(iii) How does our randomized group construction algorithm (Algorithm~\ref{alg:rand-bibd}) generate parallel classes effectively?

\subsection{Experimental Setup}

% TODO: update
\begin{table}[!t]
    \centering
    \caption{Dataset details}
    \begin{tabular}{c c c c}
        \toprule
        Dataset & \# Users & \# Train Samples & \# Test Samples\\
        \midrule
        FEMNIST~\cite{leafdataset} & 3,483 & 351,333 & 40,668\\
        Shakespeare~\cite{leafdataset} & 715 & 37,986 & 5,464 \\
        CIFAR-10~\cite{cifar10} & 1,000 & 50,000 & 10,000 \\
        \bottomrule
    \end{tabular}
    \label{tab:details_leaf_dataset}
\end{table}

% \paragraph{Datasets}
\noindent\textbf{Datasets. }
% For evaluation,
We utilize an existing FL benchmark dataset (LEAF~\cite{leafdataset}) and generate a new federated version of CIFAR-10~\cite{cifar10} for a more complicated ML task.
We use two datasets from LEAF~\cite{leafdataset} for image classification (FEMNIST) and next character prediction (Shakespeare). %in the federated environment.
FEMNIST is a federated version of Extended MNIST~\cite{emnist} (10 digits and 52 alphabet characters) where the dataset is partitioned based on writers. %of characters.
We use only digits data and users with at least 10 samples.
The Shakespeare dataset is built from the Complete Works of William Shakespeare \cite{shakespearecomplete} by partitioning lines in the plays based on the speaking roles.
In both datasets, we split each partition into 90\% training and 10\% test data.
% In both datasets, we split each partition into train and test data by sampling 90\% and 10\% of the dataset.

To evaluate on a large ML model, we generate a federated version of CIFAR-10~\cite{cifar10}, a popular image classification benchmark dataset with 10 classes. Unlike the LEAF dataset, CIFAR-10 does not contain metadata to create non-i.i.d data.
We randomly partition training and test samples across 1,000 clients evenly.
Details about the datasets are in Table \ref{tab:details_leaf_dataset}.

% \smf{I feel like a few words could be added how the new dataset is generated.}

\smallskip
\noindent\textbf{Models. }
For FEMNIST, we use a convolutional neural network comprising $5\times5$ and $2\times2$ convolutional layers with 32 and 64 filters each of which is followed by a $2\times2$ max-pooling layer, a 512-unit fully connected layer with ReLU activation, and a softmax layer.
For Shakespeare, we train a recurrent neural network that consists of an embedding layer with 100 dimensions, a GRU with 128 hidden units, and a softmax layer.
We use ResNet-18~\cite{resnet} for the federated version of CIFAR-10.

\smallskip
% For simulation, w
% We use 9 and 15 peers (peers) representing participants in the federated learning with their local datasets.
We use 9 and 15 peers representing distributed sites that train a ML model in a federated fashion.
Since the number of peers is smaller than the number of users in the datasets, we randomly assign the same number of users in the datasets to each peer. As a result, each peer uses a collection of data for those users as its local dataset.
% We split each dataset evenly in terms of the number of users and assigned those partitions to each peer. 

We compare three different methods: Local-only, FedAvg~\cite{mcmahan2017communication}, and our \sysname.

In Local-Only, every peer trains a ML model over only its local dataset without any communication.

FedAvg~\cite{mcmahan2017communication} works as follows. 
A central server maintains global ML model parameters.
Peers receive the global model parameters from the server and perform training on local datasets for the pre-defined local epochs.  %($E$). \smf{is $E$ necessary?}
Peers then send locally trained model parameters to the server.
The centralized server aggregates these model parameters by averaging them.
These steps, composing a single FL round, repeat until convergence.

For all the experiments, we train models for 50 FL rounds with a batch size of 32, a learning rate of 0.001, and the number of local epochs 1. %\smf{\# local epochs  1?} 
We use the RMSProp~\cite{rmsprop} optimizer for FEMNIST and CIFAR-10, and the Adam~\cite{adam} optimizer for Shakespeare.

For secure aggregation, peers initialize $\lambda$ by sampling from a uniform distribution on $[0, 1)$.
% For secure aggregation, peers initially sample each entry of $\lambda$'s from a uniform distribution on $[0, 1)$. \smf{$\lambda$ is a vector} \bj{Updated. I didn't put that, though}
To preserve privacy in the aggregation (Section \ref{sec:secure_aggregation}), we create a gap in communication by using the group construction algorithm (Algorithm~\ref{alg:rand-bibd}). With 9 and 15 peers, the algorithm generates 4 and 5 partitions (gaps), respectively.
The gap sizes of 4 and 5 preserve the privacy of the local parameters respectively up to 7 and 9 ADMM iterations (Theorem~\ref{lemma:max_iter}). %We set the maximum \dadmm iteration to 2, which not only provides privacy-preserving guarantee ($\leq7$ and $\leq9$) but also make the algorithm practical by reducing the communication overhead while offering estimation closer to the true average.
We set the maximum ADMM iteration to 2, which not only preserves privacy ($\leq7$ and $\leq9$) but also makes the algorithm practical by minimizing the communication overhead while offering the estimation closer to true average.
We will discuss the effect of the number of ADMM iterations in detail later (Section~\ref{sec:eval_iter}).

%In the local-only setup, each peer executed local training for 50 rounds without any synchronization.
%In FedAvg and \dadmm, each peer ran ten-local-epoch training and then synchronized their local models through the aggregation method for 10 global epochs.

\begin{figure*}
    \centering
    \begin{subfigure}[b]{0.32\linewidth}
        \centering
        \includegraphics[width=\textwidth]{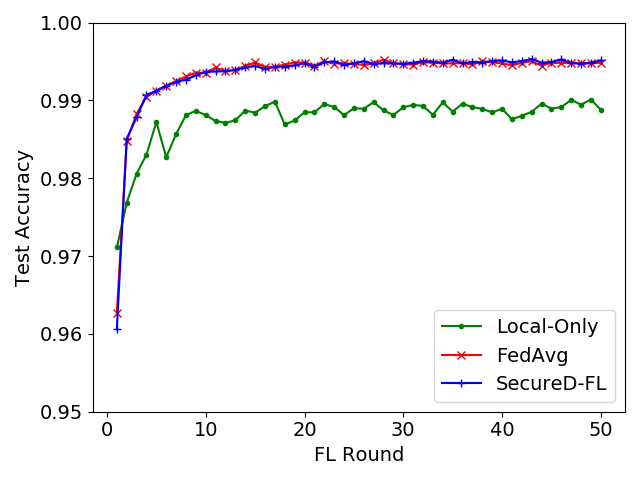}
        \caption{FEMNIST (9 peers)}
        \label{fig:lenet_val_acc_9}
    \end{subfigure}
    \begin{subfigure}[b]{0.32\linewidth}
        \centering
        \includegraphics[width=\textwidth]{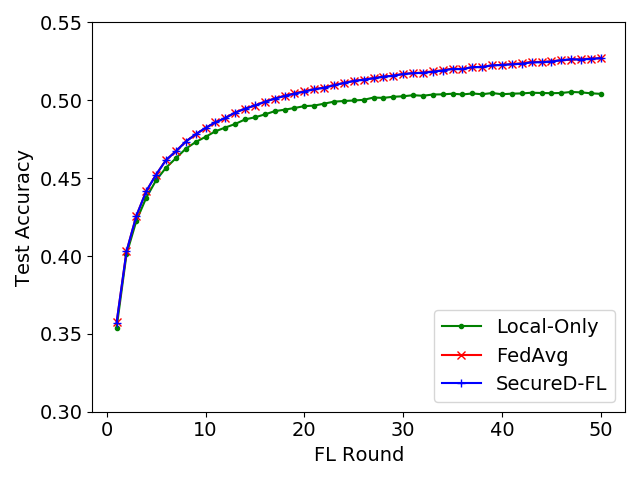}
        \caption{Shakespeare (9 peers)}
        \label{fig:rnn_val_acc_9}
    \end{subfigure}
    \begin{subfigure}[b]{0.32\linewidth}
        \centering
        \includegraphics[width=\textwidth]{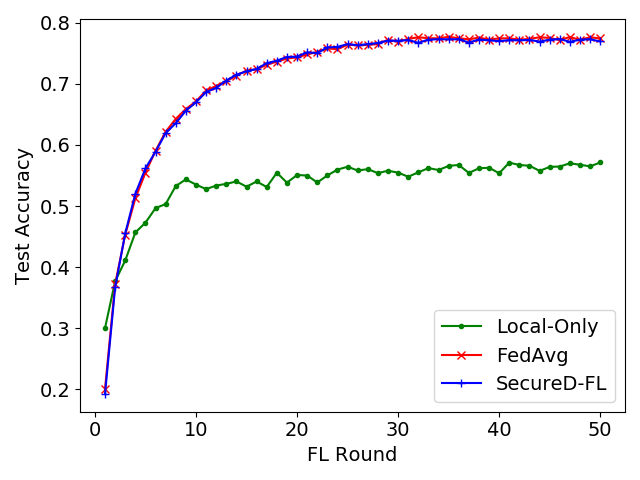}
        \caption{CIFAR-10 (9 peers)}
        \label{fig:resnet_val_acc_9}
    \end{subfigure}
    \begin{subfigure}[b]{0.32\linewidth}
        \centering
        \includegraphics[width=\textwidth]{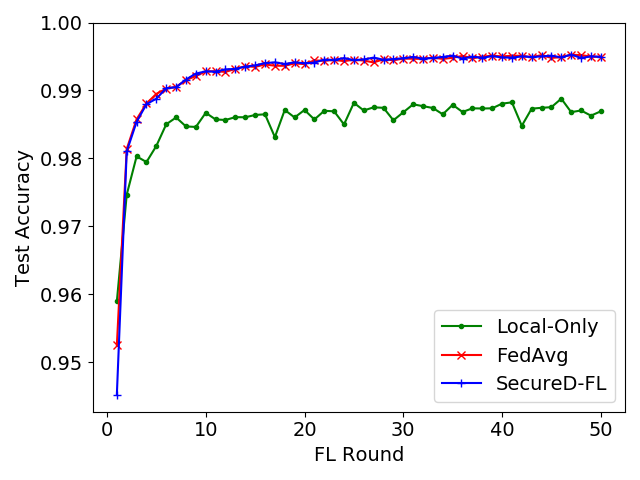}
        \caption{FEMNIST (15 peers)}
        \label{fig:lenet_val_acc_15}
    \end{subfigure}
    \begin{subfigure}[b]{0.32\linewidth}
        \centering
        \includegraphics[width=\textwidth]{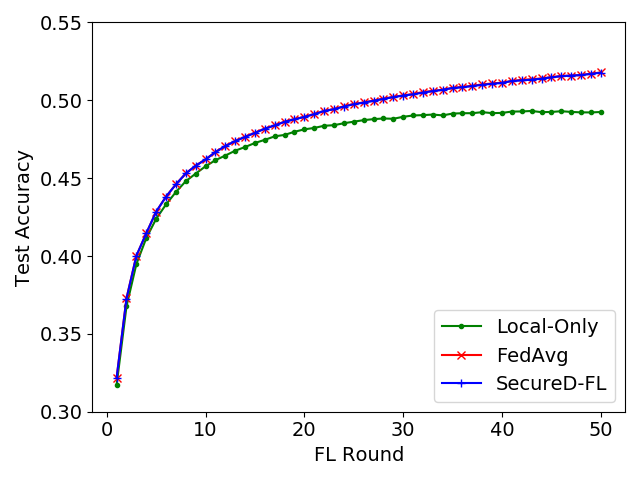}
        \caption{Shakespeare (15 peers)}
        \label{fig:rnn_val_acc_15}
    \end{subfigure}
    \begin{subfigure}[b]{0.32\linewidth}
        \centering
        \includegraphics[width=\textwidth]{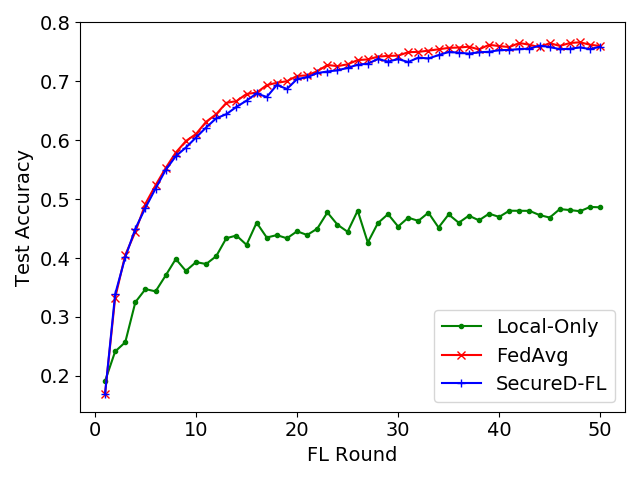}
        \caption{CIFAR-10 (15 peers)}
        \label{fig:resnet_val_acc_15}
    \end{subfigure}
    \caption{Test Accuracy per Round for Training over Different Datasets with 9 and 15 Peers. %\smf{The top left has only Round in its x label. Others have FL Round}
    }
    \label{fig:test_acc_per_epoch}
\end{figure*}

\subsection{Test Accuracy Comparison}
\label{sec:test_acc}

\begin{table}[]
    \centering
    \caption{Best Test Accuracy Over Training}
    \begin{tabular}{c c c c c}
    \toprule
    \# Peers & Datasets & Local-Only & FedAvg & SecureD-FL \\
    \midrule
    \multirow{3}{*}{9}  & FEMNIST       & 99.01\%   & 99.52\%   & 99.53\% \\
                        & Shakespeare   & 50.54\%   & 52.70\%   & 52.71\% \\
                        & CIFAR-10      & 57.13\%   & 77.69\%	& 77.33\% \\
    \cmidrule{1-5}
    \multirow{3}{*}{15} & FEMNIST & 98.88\%	& 99.52\%	& 99.53\% \\
                        & Shakespeare & 49.32\%	& 51.79\%	& 51.78\% \\
                        & CIFAR-10      & 48.68\%	& 76.64\%	& 76.08\% \\
    \bottomrule
    \end{tabular}
    \label{tab:best_test_acc}
\end{table}
%SMF: Have read till this. Friday Morning 10:49 edt
Figure \ref{fig:test_acc_per_epoch} shows test accuracy per round %\smf{averaged?} \bj{no this is not the averaged one.}
over all the peers for Local-Only, FedAvg, and our \sysname with 9 and 15 peers.
Table~\ref{tab:best_test_acc} shows the best test accuracies for different configurations in the experiments.
% For all the datasets and peers size, we observe that the Local-Only model is the least accurate. This is due to the limited local dataset size

When peers do not synchronize 
%(\smf{Probably this is the first time we are using synchronize instead of aggregating the local parameters} \bj{The purpose of the aggregation is synchronizing them. Does this sound weird?}) \smf{I think synchronize is fine} 
each other (Local-Only), the trained models are overfitted to the local training dataset, which results in lower test accuracy than FedAvg and \sysname. The overfitting exacerbates with more peers due to smaller local dataset.

FedAvg and \sysname show comparable test accuracies for all the datasets. For FEMNIST and Shakespeare, the test accuracy differs only less than 0.02\%.

For CIFAR-10, the test accuracy decreases 0.46\% for 9 peers and 0.73\% for 15 peers. 
% the difference in test accuracies is only $\pm$0.01 %percentage point (pp). \smf{do you need to state percentage point? Literally it is the difference between two percentage.} \bj{That's why we need to use percentage point! "A percentage point or percent point is the unit for the arithmetic difference of two percentages." from Wikipedia} 
% For CIFAR-10, the test accuracy difference increases to 0.36 pp and 0.56 pp, respectively. 
%We note that 
ResNet-18 has more parameters than the models for the other datasets.
More parameters increase the accumulated estimation error, which adversely affects the trained model performance.
% The accumulated estimation error for each of these parameters adversely affects the trained model performance.
Increasing the number of peers makes the estimation error more significant.
Despite these, we do not observe any substantial degradation of test accuracy between FedAvg and \sysname. 
In all cases, the degradation is only less than 1\%.

\subsection{Trade-off: \# ADMM Iterations vs. Estimation Quality}\label{sec:eval_iter}
In this section, we explore a trade-off between the number of ADMM iterations and the quality of the estimated results. %Since the \dadmm optimization for averaging parameters is a convex problem, more \dadmm iterations makes the estimated average closer to the true average. 
%The estimated average gets closer to the true average if we allow a large number of iterations in \dadmm, which may increase the communication overhead in the synchronization. More importantly this may reveal the privacy (Section~\ref{sec:secure_aggregation}). We explore a sweet spot that minimizes the communication overhead, preserves the privacy of locally trained mode parameters, and keeps acceptably low estimation error.
Although a large number of iterations in ADMM makes the estimated average closer to the actual average, this can increase the communication overhead and reveal privacy (Section~\ref{sec:secure_aggregation}). 
% If we allow a large number of iterations in ADMM, the estimated average gets closer to the actual average. 
% But this increases the communication overhead and reveals privacy (Section~\ref{sec:secure_aggregation}). 
We explore a sweet spot that minimizes the communication overhead, keeps the estimation error low, and still provides the privacy guarantee for local model parameters.

%We generate checkpoints of model parameters in the training of FEMNIST by FedAvg with 9 peers after every 5 rounds (10 checkpoints in total).\dadmm aggregations with different number of iterations are run over the checkpoints and we measure averaged mean squared errors (MSE) over the checkpoints. We change the number of iterations in the range of $[1, 8)$, which guarantees the privacy (Lemma~\ref{lemma:max_iter}).
We compare estimated values by the ADMM-based aggregation and actual averages of model parameters in real ML training. For this, we generate 10 checkpoints of model parameters (in every five FL rounds) during the training over FEMNIST by FedAvg.
We run the ADMM aggregation for each checkpoint, measure mean squared error (MSE), and calculate the averaged MSE over the 10 checkpoints.

We use 9 peers for training with a gap size of 4, which allows at most seven ADMM iterations to guarantee the privacy (Theorem~\ref{lemma:max_iter}). 
We vary the number of iterations in ADMM aggregations and calculates averaged MSEs.
% Our goal here is to compare the actual aggregated result of the parameters to the estimated values in ADMM. For this, we generate several checkpoints of model parameters during the training of FEMNIST by FedAvg.
% We then vary the number of iterations in ADMM aggregations over the checkpointed parameters, and measure averaged mean squared errors (MSE).
% As the number of peers in this experiment is 9, we allow at most seven ADMM iterations to guarantee the privacy (Theorem~\ref{lemma:max_iter}). 
Figure~\ref{fig:diff_admm_mse} shows the results of the experiment. The average MSE decreases fast as the number of iterations increases. After 4 iterations, the error drops below $10^{-13}$.

\begin{figure}[!t]
    \centering
    \includegraphics[width=0.8\linewidth]{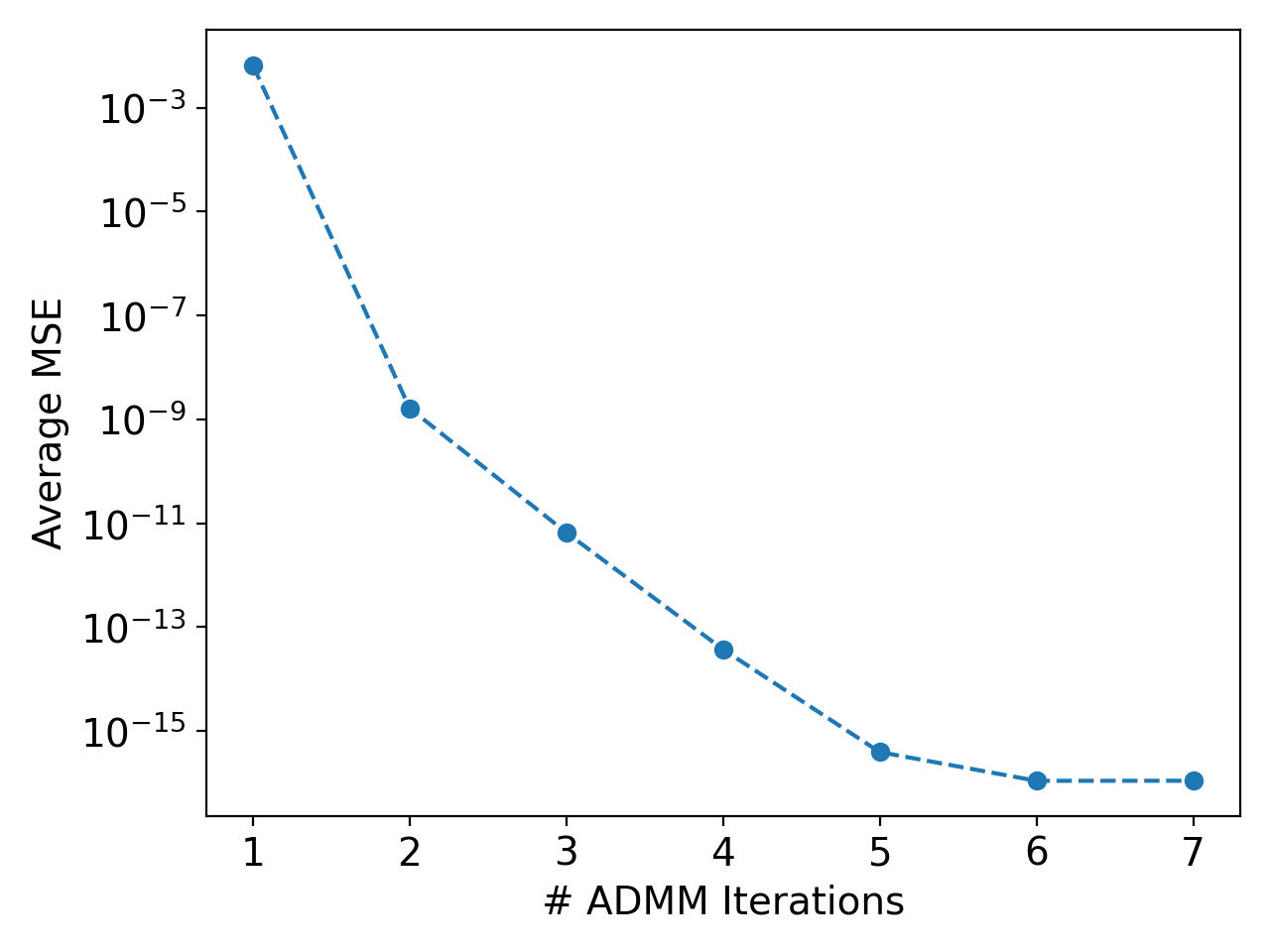}
    \caption{Average MSE with Different \# ADMM Iterations}
    \label{fig:diff_admm_mse}
\end{figure}

%The average MSEs by varying numbers of \dadmm iterations are shown in Figure~\ref{fig:diff_admm_mse}. more \dadmm iterations decrease MSE as the \dadmm aggregation estimate converges, but diminishing returns are observed.
%As the number of iteration increases from 1 to 5, the estimation error drops significantly by 2--6 orders of the magnitude. \smf{72\% still seems like large. Is there any other way to quantify it?} If \dadmm runs more than 6, the estimation error decrease is reduced to less than 72\%.

\begin{figure}[!t]
    \centering
    \includegraphics[width=0.8\linewidth]{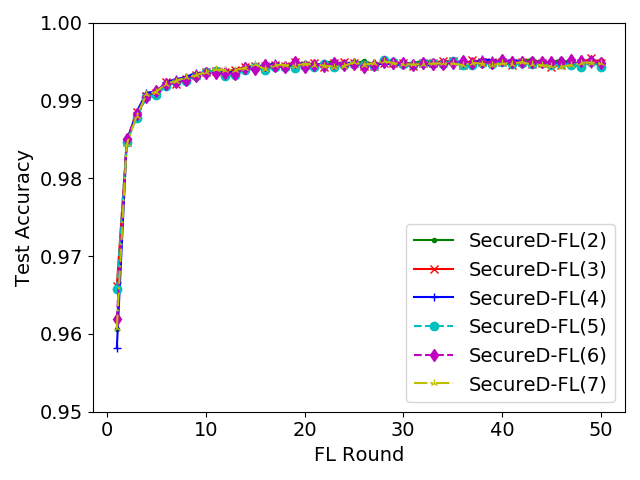}
    \caption{Test Accuracy per FL Round with Different \# ADMM Iterations (FEMNIST)}
    \label{fig:diff_admm_test_acc}
\end{figure}

%Different estimation quality is achieved when the number of \dadmm iterations varies. This estimation error may also affect the training.
To see how estimation error in ADMM aggregation affect test accuracy, we run another set of experiments. We perform training for the FEMNIST dataset with 9 peers with various numbers of ADMM iterations.
%We run the experiments that train over FEMNIST with 9 peers by changing the number of \dadmm within the range of $[1, 8)$.
Figure~\ref{fig:diff_admm_test_acc} shows per-round test accuracy with different numbers of ADMM iterations.
 %since the estimation error causes the training to diverge.
%\smf{number of \dadmm = number of iterations? in the next sentence.} \bj{Fixed.}
For ADMM iterations greater than 1, the estimation error due to early termination does not adversely affect the test accuracy. The difference in accuracy across the ADMM iterations is negligible. So for the experiments in Section~\ref{sec:test_acc}, we set the number of ADMM iterations to 2. This decreases the communication overhead and also preserves the desired privacy.
% \smf{Need reference for the following line.}
% This result complies with the observation in prior work that the machine learning training can tolerate errors \bj{need reference}, in our case, estimation error in model parameter aggregation. 

%The number of \dadmm iterations larger than 2 does not improve the quality of the trained model and increases the communication overhead that slows the aggregation process. Also, large number of \dadmm iterations can increase vulnerability by making more data public in the communication. \bj{We do not formally show this. Just remove this sentence?}. Thus, we decide to use 2 as the \dadmm iterations at the experiments in Section~\ref{sec:test_acc}, which minimizes the overhead and maintains the quality of trained models while providing the privacy guarantee.

\subsection{Randomized Group Construction Algorithm Performance}
%\smf{I have added these lines with a figure to have a quality result of the randomized group construction}
We evaluate the randomized group construction algorithm (Algorithm \ref{alg:rand-bibd}) to construct the communication groups. 
Figure \ref{fig:perfrand} shows a plot of the our group construction algorithm performance.
We plot the number of generated parallel classes (i.e., the number of partitions of the peers) by varying the number of peers.
The red and green lines are the number of parallel classes for group sizes 3 and 4, respectively. 
We see that the number of generated partitions increases as the number of peers grows. 
This also shows the applicability of Algorithm \ref{alg:rand-bibd} to construct groups of size more than 3.

\begin{figure}
    \centering
    \includegraphics[width=\columnwidth]{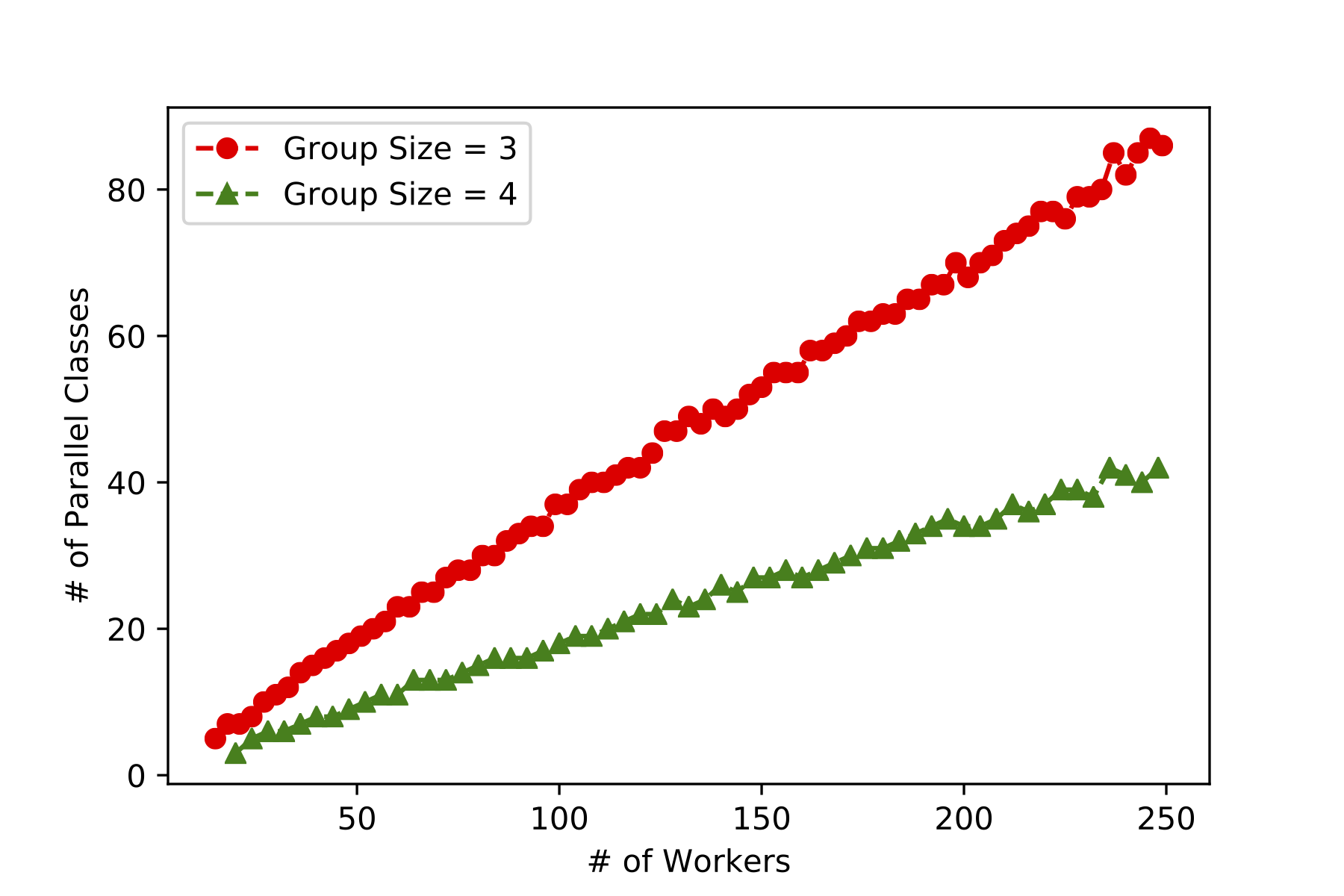}
    \caption{Group Construction Algorithm Performance
    % \caption{Quality of Randomized Group Construction Algorithm %\bj{Please capitalize labels in the plot} \smf{labels? You mean ``n''?} 
    }
    \label{fig:perfrand}
\end{figure}

\section{Related Work}
% ADMM has become the method of choice for decentralized optimization \cite{zhang2018admm} because of its implementation simplicity, good theoretical convergence \cite{shi2014linear,ling2016weighted}, and satisfactory performance even in non-convex, non-smooth, ill-conditioned problems \cite{ling2016weighted,ling2013decentralized,wang2019global}.
% Prior work on privacy preservation in ADMM includes \cite{WEERADDANA20179502,zhang2018admm}. 
{Several approaches have been proposed to provide privacy in ADMM.} In~\cite{WEERADDANA20179502}, the authors introduce several transformation methods to alter the objective function or constraints in a way that transmitted messages are safe under an eavesdropping attack. But as we have shown in Lemma~\ref{lemma:privacy_1}, transformation-based methods are not enough for protection against honest-but-curious attacks. Differential~Privacy~(DP) %\cite{differential_privacy}
is also considered to preserve privacy in ADMM \cite{Zhang2017dynamic, Huang2020dpadmm, zhang2018improving}. Although DP offers quantitative privacy guarantees, it is challenging to accurately estimate sensitivity and control the privacy-accuracy trade-off. {The authors in~\cite{zhang2018admm} augment ADMM with expensive homomorphic encryption which we avoid.}
% Another approach \cite{zhang2018admm} to preserve privacy is to introduce a time-varying penalty in ADMM and couple it with a cryptographic method. 
% Our focus here is preserving the privacy of the parameters by limiting communication.
%We achieve the privacy preservation through limiting communication.
%Our algorithm is much simpler with the same linear convergence rate as \cite{zhang2018admm}.
% Unlike \cite{zhang2018admm}, we do not change the ADMM formulation and do not use an expensive cryptographic system, but provide a constraint on the communication between peers. Our algorithm is much simpler with the same linear convergence rate as \cite{zhang2018admm}.

% As potential usages of FL grew, the security risk in transferring model parameters has gained attention.
% Prior work has proposed approaches that preserve the privacy of model parameters trained by participants, mostly focusing on the FL environment with a centralized aggregation server.

In FL, Secure Multi-party Computation (SMC) techniques have been employed to preserve the privacy of transferring model parameters. PySyft~\cite{pysyft} and Chen~et al.~\cite{chen2019secure} provide the privacy guarantee by using SPDZ~\cite{spdz}. Truex, et al.~\cite{truex2018hybrid} utilizes a threshold-based addictive homomorphic encryption scheme~\cite{paillier2019homomorphic}. %HybridAlpha~\cite{hybridalpha} applies multi-input functional encryption~\cite{mife}. %to their federated learning framework.
However, these approaches assume that secret keys are generated and distributed to participants securely~\cite{pysyft, chen2019secure, truex2018hybrid} or a trusted third party is essential over the entire computation~\cite{hybridalpha}. These are not required by our secure aggregation method.
Bonawitz~et al.~\cite{bonawitz2017practical} addresses the trusted third party requirement by including the secret key generation into the aggregation protocol. All these approaches to federated learning need a centralized aggregation entity.

% Secure Multi-party Computation (SMC) techniques have been employed to preserve the privacy of transferring model parameters. PySyft~\cite{pysyft} and Chen~et al.~\cite{chen2019secure} provide the privacy guarantee by using SPDZ~\cite{spdz}. Truex, et al.~\cite{truex2018hybrid} utilizes a threshold-based addictive homomorphic encryption scheme~\cite{paillier2019homomorphic}. HybridAlpha~\cite{hybridalpha} applies multi-input functional encryption~\cite{mife} to their federated learning framework.
% However, these approaches assume that secret keys are generated and distributed to participants securely~\cite{pysyft, chen2019secure, truex2018hybrid} or a trusted third party is essential over the entire computation~\cite{hybridalpha}. These are not required by our secure aggregation method.

% Bonawitz~et al.~\cite{bonawitz2017practical} addresses the trusted third party requirement by including the secret key generation into the aggregation protocol. Secret sharing and authenticated encryption are used for masking %and unmasking
% raw parameters, which provides the privacy guarantee. All these approaches can be applied only to the federated learning environment with the centralized aggregator, which differs from our decentralized method.

Along with SMC, employing DP on FL has been also proposed in prior work~\cite{pysyft, truex2018hybrid, hybridalpha}, but the challenges in estimating sensitivity and privacy and model accuracy trade-off have not fully addressed. Our proposed secure aggregation algorithm is compatible with DP, which we leave as future exploration.
% Along with SMC, Differential~Privacy~(DP)~\cite{differential_privacy} has also been considered in prior work~\cite{pysyft, truex2018hybrid, hybridalpha}.
% Although added DP noise degrades the quality of trained models, DP provides a theoretical privacy guarantee. 
% Although DP can theoretically provide a strong privacy guarantee, they suffer from the trade-off between privacy and the quality of trained models.
% Our proposed secure aggregation algorithm can be equipped with DP. We leave this as future work.

% \subsection{Multi-party Computation}

% \subsection{Differential Privacy}
% Differential privacy for ML; moment accountant method~\cite{abadi2016dldifferentialprivacy}

%\subsection{Distributed Machine Learning}

%\subsection{Frameworks for Federated Learning}

The authors in~\cite{zhang2019admm, ruan2017secure} propose mechanisms for privacy preserving distributed optimization using a combination of ADMM optimization and homomorphic encryption. In comparison, our approach avoids costly homomorphic encryption. DP-ADMM~\cite{Huang2020dpadmm} combines ADMM optimization with differential privacy. The authors in~\cite{zhang2018improving} propose a perturbation method for ADMM where the perturbed term is correlated with ADMM penalty parameters.

%\begin{itemize}
 %   \item ADMM Based Privacy-preserving Decentralized Optimization~\cite{zhang2019admm}: \url{https://ieeexplore.ieee.org/document/8410603}
  %  \item DP-ADMM: ADMM-Based Distributed Learning With Differential Privacy~\cite{Huang2020dpadmm}
   % \url{https://ieeexplore.ieee.org/document/8772211/}
%    \item Secure and Privacy-Preserving Average Consensus~\cite{ruan2017secure}: \url{http://dl.acm.org/citation.cfm?doid=3140241.3140243}
 %   \item Improving the Privacy and Accuracy of ADMM-Based Distributed Algorithms~\cite{zhang2018improving}: \url{https://arxiv.org/abs/1806.02246}
%\end{itemize}

%\input{miscellaneous.tex}
\section{Conclusion}
In this paper, we have developed a new decentralized federated learning algorithm (\sysname), focusing on the privacy preservation of ML models in training.
We identified privacy weakness in the classical ADMM-based aggregation.
% We show that the classical ADMM aggregation has privacy weakness. 
To address this issue, we introduced a communication pattern (gap) that enables privacy protection of ML model parameters in the ADMM-based aggregation.
We proposed a randomized algorithm to efficiently generate this communication pattern by connecting it to \emph{balanced block design} from combinatorial mathematics.
We proved the privacy guarantee of our aggregation protocol in the honest-but-curious threat model.

Our evaluation results with the benchmark datasets showed that \sysname performed ML training comparable to the centralized standard FL method
($<0.73$\% degradation in test accuracy)
%($<0.56\%$ point difference in test accuracy) 
% with an only two-round secure decentralized aggregation.
with the privacy guarantee.

%Our proposed communication scheme is worthy of exploring in its merits. It applies to more general decentralized optimization problems. Future work could be to develop decentralized privacy-preserving optimization techniques employing similar communication patterns.  

%The most common approach for aggregation is the weighted averaging which may not be the only way to build the global model. Another idea is to cluster the devices into groups based on a distance metric in the weight parameter space. Then different clusters could also serve as learning the task structure in multi-task learning. 

%Another approach is to create a column vector of the weights of each individual device and build a matrix. On this matrix we can do SVD (similar to PCA) and find the most significant directions. These directions could serve as the aggregated model or multiple models in case of multi-task learning. While this might not be completely new it can serve to understand better the structure of learning. For example from SVD we can learn which device data is more significantly contributing to the aggregate model performance. This is related to learning latent variables in matrix factorization or recommendation problems. Also related to using multiple weak learners in ensemble methods for improving model performance. 

% \newpage
\bibliographystyle{IEEEtran}
\bibliography{paper}

\end{document}